\newcommand{\w}{\ensuremath{\mathbf{w}}}
\newcommand{\x}{\ensuremath{\mathbf{x}}}
\newcommand{\y}{\ensuremath{\mathbf{y}}}
\newcommand{\0}{\ensuremath{\mathbf{0}}}
\newcommand{\bbE}{\ensuremath{\mathbb{E}}}
\newcommand{\calA}{\ensuremath{\mathcal{A}}}
\newcommand{\calO}{\ensuremath{\mathcal{O}}}
\newcommand{\abs}[1]{\left\lvert#1\right\rvert}
\newcommand{\norm}[1]{\left\lVert#1\right\rVert}
\newcommand{\caja}[4][1]{{%
    \renewcommand{\arraystretch}{#1}%
    \begin{tabular}[#2]{@{}#3@{}}%
      #4%
    \end{tabular}%
    }}
\newtheorem{thm}{Theorem}
\newtheorem{lem}[thm]{Lemma}
\newtheorem{cor}[thm]{Corollary}
\newtheorem{rmk}[thm]{Remark}
\definecolor{light-gray}{gray}{0.8}
\DeclareMathOperator*{\argmin}{arg\,min}
\newcommand{\sbr}[1]{\left[#1\right]}
\newcommand{\rbr}[1]{\left(#1\right)}
\newcommand{\cbr}[1]{\left\{#1\right\}}
\newcommand{\dotp}[2]{\langle{#1},\,{#2}\rangle}
\newcommand{\ie}{i.e.\@}
\newcommand{\eg}{e.g.\@}
\newcommand{\tw}{\tilde{\w}}
\newcommand{\hF}{\hat{F}}
\newcommand{\hG}{\hat{G}}
\newcommand{\hw}{\hat{\w}}
\newcommand{\hx}{\hat{\x}}
\newcommand{\hphi}{\hat{\phi}}
\newcommand{\RSAG}{\operatorname{RSAG}}
\newcommand{\MP}{\operatorname{MP}}
\newcommand{\MPAGD}{\operatorname{MP-AGD}}
\newcommand{\MPSVRG}{\operatorname{MP-SVRG}}
\title[Stochastic Nonconvex Optimization with Large Minibatches]
{Stochastic Nonconvex Optimization with Large Minibatches}
\begin{document}

\maketitle

\begin{abstract}
We study stochastic optimization of nonconvex loss functions, which are typical objectives for training neural networks. We propose stochastic approximation algorithms which optimize a series of regularized, nonlinearized losses on large minibatches of samples, using only first-order gradient information. Our algorithms provably converge to an approximate critical point of the expected objective with faster rates than minibatch stochastic gradient descent, and facilitate better parallelization by allowing larger minibatches. 
\end{abstract}

\begin{keywords}
stochastic nonconvex optimization, minibatch stochastic gradient descent, minibatch-prox
\end{keywords}

\section{Introduction}
\label{sec:intro}

Machine learning algorithms ultimately try to optimize the performance of models in the population. 
Consider the following stochastic optimization (generalized learning) problem~\citep{Vapnik00a,Shalev_09a}: 
\begin{align} \label{e:obj}
  \min_{\w}\; \phi (\w) := \bbE_{\xi \sim D} \left[ \ell (\w, \xi) \right]
\end{align}
where our goal is to learn a predictor $\w$ given the instantaneous (loss) function $\ell(\w,\xi)$ and i.i.d. samples $\xi_1,\xi_2,\dots$ from some unknown data distribution $D$. 
In this work, we focus on losses $\ell (\w, \xi)$ that are \emph{nonconvex} functions of $\w$; a prevalent example of this setting is the training of deep neural networks, where $\w$ denotes the collection of trainable weights of a deep learning model, and $\ell(\w,\xi)$ measures the loss of prediction (\eg, classification or regression) for the sample $\xi$ using weights $\w$.

Despite efforts in introducing higher order optimization methods to this problem, stochastic gradient descent (SGD) with minibatches and its variants~\citep{Bottou91a,Lecun_98b,Duchi_11a,Zeiler12a,KingmaBa15a} remain by far the most popular methods for training deep neural networks, due to its simplicity and superior performance than the alternatives. In the vanilla version of minibatch SGD, we compute the averaged gradient over a small set of samples (called minibatch), e.g. using the backpropagation algorithm, and simply take a step in the negative direction. The use of minibatch (as opposed to a single sample for estimating the gradient) makes the training process more stable as it reduces the variance of gradient estimate. Moreover, to process the same amount of samples, it takes smaller number of updates if a larger minibatch size is used, and the backpropagation procedure on a larger minibatch can utilize massive parallelization of linear algebra routines provided by advanced computational hardware (GPUs and clusters). As a result, using a larger minibatch in SGD can potentially significantly reduce the parallel training time.

Recently, there has been some empirical analysis of minibatch SGD for non-convex problems, focusing mostly on practical issues such as the correct scaling of stepsize (learning rate) and momentum with minibatch size~\citep{Goyal_17a,Hoffer_17a}. A prominent observation is that, by properly setting the stepsize, minibatch SGD works well (converges to similarly good test set performance) for a wide range of minibatch sizes, while large minibatches facilitate better parallelization. But intriguingly, beyond certain threshold of minibatch size, training result start to deteriorate and simply scaling the stepsize does not help. The primary goal of this work is to theoretically investigate the issue of minibatch size in stochastic nonconvex optimization, and to provide practical algorithms/guidance to training deep neural networks.
Our analysis applies to smooth nonconvex instantaneous losses, and uses a characterization of nonconvex functions named almost-convexity, which we introduce below.

\paragraph{Problem setup}
In this work, we assume that the differentiable instantaneous loss
$\ell(\w,\xi)$ is $\beta$-smooth and $\sigma$-almost convex. Recall that a function $f(\w)$ is $\beta$-smooth if $\norm{\nabla f (\w) - \nabla f (\w^\prime)} \le \beta \norm{\w - \w^\prime}$ for all $\w, \w^\prime$, and in this case we have the following quadratic approximation for $f (\w)$:
\begin{align*}
  \abs{ f(\w) - f(\w^\prime) - \dotp{\nabla f(\w^\prime)}{\w - \w^\prime} } \le \frac{\beta}{2} \norm{\w-\w^\prime}^2, \qquad 
  \forall \w,\w^\prime.
\end{align*}
On the other hand, a nonconvex function $f(\w)$ is $\sigma$-almost convex for $\sigma \ge 0$ if
\begin{align*}
  f(\w) - f(\w^\prime) - \dotp{\nabla f(\w^\prime)}{\w - \w^\prime} \ge - \frac{\sigma}{2} \norm{\w-\w^\prime}^2, \qquad 
  \forall \w,\w^\prime.
\end{align*}
A convex function is $0$-almost convex, and a $\beta$-smooth function is $\sigma$-almost convex for some $\sigma \le \beta$. For a twice differentiable function that is both $\beta$-smooth and $\sigma$-almost convex, the eigenvalues of its Hessian matrix lie in $[-\sigma,\beta]$. 

We now discuss some properties we need when accessing the stochastic objective through samples. 
First, the stochastic gradient estimated on a single sample is unbiased, \ie,
\begin{align*}
  \bbE \sbr{ \nabla \ell (\w,\xi) } = \nabla \phi (\w), \qquad \forall \w.
\end{align*}
Second, as is common in the stochastic optimization literature (see, \eg, \citealp{Lan12a,GhadimLan16a}), we assume 
and that the variance of the stochastic gradient is bounded by $V^2$, \ie,
\begin{align*}
  \bbE_{\xi} \norm{\nabla \ell (\w,\xi) - \nabla \phi (\w) }^2 \le V^2, \qquad \forall \w. 
\end{align*}
Denote by $\phi^* = \min_{\w} \phi(\w)$ the (globally) minimum value of $\phi(\w)$, which we assume to be finite. Since in general we can not hope to efficiently obtain the global minimum of a nonconvex objective, 
the reasonable goal here is to find an approximate critical point $\w$ satisfying for some $\varepsilon>0$ that
\begin{align} \label{e:critical}
  \bbE \norm{\nabla \phi (\w)}^2 \le \varepsilon^2.
\end{align}
We are interested in the number of samples and the amount of computation needed to achieve this goal.

\paragraph{Significance of almost convexity}
One may wonder whether it is reasonable to assume the nonconvex objective to be almost
convex. We note that, almost convexity arises from the optimization of
general smooth nonconvex objectives. Based on the Hessian Lipschitz assumption,
\citet{Carmon_17a} have shown that one can alternate over the negative curvature
descent algorithm (which eventually leads us to a point at which the Hessian has
small negative eigenvalues) and optimizing almost-convex problems, to
obtain overall faster convergence than gradient descent in the
non-stochastic setting. 
In fact, the almost-convex procedure is shown to be the key to the faster rate; see
their Section 4 and also~\citet[Appendix A]{Allen-Zhu17a}. These
results motivate us to study the stochastic version of the almost-convex
problems, under common assumptions used to analyze stochastic gradient
descent. We verify that improvement obtained in the non-stochastic
case does carry over to the stochastic case, and in turn facilitates
better parallelism. 
On the other hand, while we assume above that each individual loss is
almost convex, relaxation to the almost convexity of only the population
objective will be discussed later. 

\subsection{Minibatch SGD for nonconvex stochastic optimization}
\label{sec:rsag}

The theoretical performance of minibatch SGD has been relatively well studied for convex objectives~\citep{Lan12a,Dekel_12a,Cotter_11a}. After $T$ minibatch gradient updates, each using a stochastic gradient estimated on $b$ samples, the accelerated minibatch SGD algorithm on a convex $\phi(\w)$ returns an iterate $\w$ satisfying 
\begin{align} \label{e:acc-minibatch-sgd-convex}
  \bbE \sbr{ \phi (\w) - \phi (\w_*) } \le \calO \rbr{ \frac{\beta \norm{\w_0 - \w_*}^2}{T^2} +  \frac{V \norm{\w_0 - \w_*}}{\sqrt{b T}} },
\end{align}
where $\w_0$ is the initialization, and $\w_*=\argmin_{\w} \phi(\w)$.

For stochastic nonconvex optimization,~\citet{Ghadim_16a} analyzed the convergence of minibatch SGD under the same problem setup as ours, and~\citet{GhadimLan16a} further proposed a randomized stochastic accelerated gradient (RSAG) method which resembles the accelerated stochastic approximation method for convex optimization~\citep{Lan12a}. After $T$ minibatch gradient updates, each using stochastic gradient estimated on $b$ samples, their algorithms return an iterate $\w$ satisfying\footnote{This can be deduced from eqn~(3.20) of~\citet{GhadimLan16a}, as the variance of stochastic gradient reduces to $V^2 / b$ when $b$ samples are used in estimating the gradient.}
\begin{align}\label{e:rsag}
  \bbE \norm{\nabla \phi (\w) }^2 
  \le \calO \rbr{ \frac{\beta (\phi (\w_0) - \phi^*)}{T} + \frac{V \sqrt{ \beta (\phi (\w_0) - \phi^*)}}{\sqrt{bT}} }.
\end{align}
To parse this result, we observe the following:
\begin{itemize}
\item When $V=0$, or in other words exact gradients are used, the
  second term vanishes and the convergence rate reduces to
  $\norm{\nabla \phi (\w) }^2 \le \calO \rbr{ \frac{\beta (\phi (\w_0)
      - \phi^*)}{T} }$, recovering the rate for 
  deterministic gradient descent~\citep{Ghadim_16a}. We refer to this
  term as the ``optimization error'' since it is independent of the
  samples.
\item  The second term in~\eqref{e:rsag} results from the noise in stochastic gradients, and we refer to it as the ``sample error'' since it results from the sampling process. This term is asymptotically dominant as long as $b=\calO \rbr{ \frac{V^2 T}{\beta (\phi (\w_0) - \phi^*)} }$. 
  Using a much larger $b$, the first term of~\eqref{e:rsag} becomes dominant but since the first term is independent of $b$, the algorithm is no longer sample efficient (it is using more fresh samples than needed). 
  This is consistent with the empirical findings of practitioners of minibatch SGD: beyond certain minibatch size, learning slows down and in particular, the objective on test set (which is an estimate of the population objective) do not decrease faster with the amount of computation, even though more samples and computation (\eg, backpropagation) is involved in each stochastic gradient update. 
\end{itemize}

Denote the total number of samples used by $N=b T$. The convergence rate~\eqref{e:rsag} indicates that, to find an critical point satisfying~\eqref{e:critical}, the total sample needed is $N(\varepsilon)=\calO \rbr{ \frac{V^2 \beta (\phi (\w_0) - \phi^*)}{\varepsilon^4} }$, while the maximum minibatch size that maintains sample efficiency, and the iteration complexity using this minibatch size, are respectively
\begin{align} \label{e:rsag-complexity}
  b_{\RSAG}=\calO \rbr{ \frac{V \sqrt{N(\varepsilon)}}{\sqrt{\beta (\phi (\w_0) - \phi^*)}} } = \calO \rbr{\frac{V^2}{\varepsilon^2}},
  \qquad 
  T_{\RSAG}=\calO \rbr{ \frac{\beta (\phi(\w_0)-\phi^*)}{\varepsilon^2} }.
\end{align}
An optimal choice of minibatch size (up to constants) for this method
is thus $b_{\RSAG}$---in the regmime $b < b_{\RSAG}$, increasing the
minibatch size leads to reduced number of iterations, down to a
minimum of $T_{\RSAG}$, but any further increase would increase the
overall work performed (overall number of vector operations), without
decreasing the required number of iterations.  

In a highly parallel setting, the number of mini-batch gradient
evaluations $T$ captures the {\em parallel runtime}, since each such
evaluation can be efficiently parallelized over the machines, and so
throughout we account of the parallel runtime in terms of the number
of such mini-batch gradient evaluations, regardless of the number of
points involved in each such mini-batch.  We also account for the
overall work performed (or energy consumed), in terms of the overall
number of gradient evaluations or vector operations.  For mini-batch
SGD we process each point once, and so the overall work is
$\calO(N(\varepsilon))$.

\subsection{Iterative convexification methods for deterministic
  nonconvex optimization}
\label{sec:erm}

The approach discussed in Section~\ref{sec:rsag} draws fresh samples
in each update.  A alternative approach is to draw a single set of $n$
training examples and minimize the empirical risk (or sample averaged
approximation) on these points using deterministic optimization techniques:
\begin{align} \label{e:obj-erm}
  \min_{\w}\; \hphi(\w) = \frac{1}{n} \sum_{i=1}^n \ell(\w,\xi_i). 
\end{align}
We could attempt to optimize \eqref{e:obj-erm} using gradient descent.
But recently,~\citet{Carmon_17a} and \citet{Allen-Zhu17a} demonstrated
that when $\sigma \ll \beta$, \ie, when the objective is not too
nonconvex (the negative eigenvalues are not too large in magnitude),
one can obtain faster convergence than gradient descent, by
transforming an almost-convex objective into a series of (strongly)
convex optimization problems which are then solved very efficiently by
batch accelerated gradient descent; 
this technique will be discussed in detail in
Section~\ref{sec:reduction}.  Applying this technique to
\eqref{e:obj-erm}, to obtain a $\w$ satisfying $\norm{ \nabla
  \hphi(\w) }^2 \le \varepsilon^2$, it is sufficient to
perform\footnote{This amounts to plugging in $\gamma=\sigma$
  in~\citet{Carmon_17a}[Lemma~3.1] for their
  \texttt{Almost-Convex-AGD} algorithm. Note that the authors also
  showed the $\tilde{\calO} \rbr{\frac{1}{\varepsilon^{7/4}}}$
  iteration complexity is achievable with second order information.}
\begin{align} \label{e:carmon}
  T_{\textrm{BATCH}} = \tilde{\calO} \rbr{\frac{\sqrt{\sigma \beta} (\hphi (\w_0) - \min\limits_{\w} \hphi (\w) ) }{\varepsilon^2}}
\end{align}
exact (batch) gradient-based updates.  Since again each such batch
computation can be calculated in parallel, we get a parallel runtime
of $\calO(T_{\textrm{BATCH}})$ and the total number of gradient
evaluations required is $\calO(T_{\textrm{BATCH}} \cdot
N(\varepsilon))$.

We suggest algorithms that are based on the same intuition and use the
same convexification procedure, but we propose tackling the stochastic
optimization problem directly, using approximate gradients obtained
from minibatches at each iteration.  This has two advantages.  First,
we are tackling the stochastic optimization objective directly, which
is many cases is our true objective, and so obtain guarantees directly
on the population $\norm{\nabla \phi(\w)}$ rather than merely its
empirical approximation $\norm{\nabla \hphi(\w)}$.  Second, as we
shall see, but using only part of the data at each iteration, instead
of the entire data set, we can reduce the total amount of work
performed (total number of gradient computations and vector
operations) without sacrificing the parallel runtime and accuracy guarantees.

\begin{figure}[t]
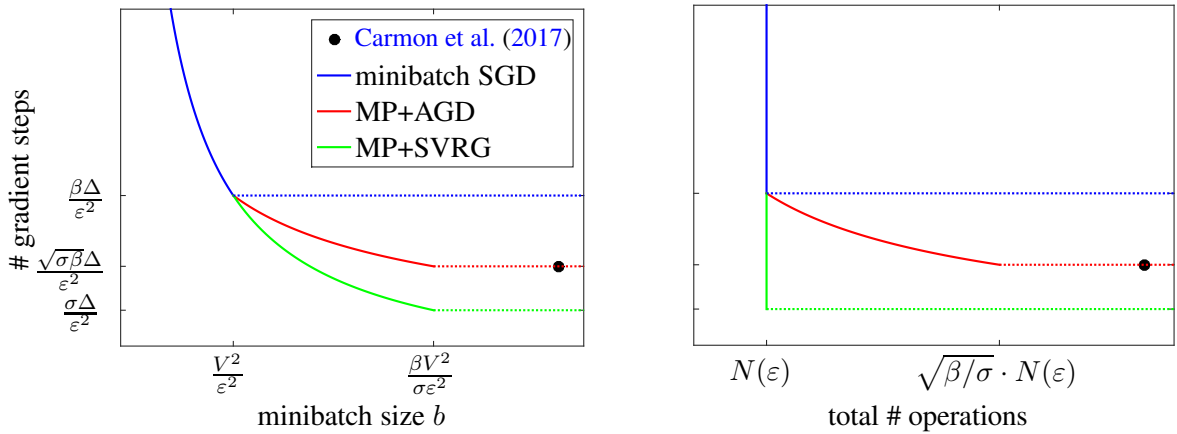

  \centering
  \psfrag{batch}[l][l][0.9]{\citet{Carmon_17a}}
  \psfrag{minibatch SGD}[l][l]{minibatch SGD}
  \psfrag{MP-AGD}[l][l]{MP+AGD}
  \psfrag{MP-SVRG}[l][l]{MP+SVRG}
  \begin{tabular}{@{}c@{\hspace{0.095\linewidth}}c@{}}
    \psfrag{b}[][]{minibatch size $b$}
    \psfrag{GD}[b][t]{\# gradient steps}
    \psfrag{x1}[][]{$\frac{V^2}{\varepsilon^2}$}
    \psfrag{x2}[][]{$\frac{\beta V^2}{\sigma \varepsilon^2}$}
    \psfrag{y1}[r][r]{$\frac{\sigma \Delta}{\varepsilon^2}$}
    \psfrag{y2}[r][r]{$\frac{\sqrt{\sigma \beta} \Delta}{\varepsilon^2}$}
    \psfrag{y3}[r][r]{$\frac{\beta \Delta}{\varepsilon^2}$}
    \includegraphics[width=0.485\linewidth]{mpagd1.eps} & 
    \psfrag{GD}[][]{}
    \psfrag{x1}[][]{$N (\varepsilon)$}
    \psfrag{x2}[][]{$\sqrt{\beta/\sigma} \cdot N (\varepsilon)$}
    \psfrag{y1}[][]{}
    \psfrag{y2}[][]{}
    \psfrag{y3}[][]{}
    \psfrag{Energy}[][]{total \# operations}
    \includegraphics[width=0.42\linewidth]{mpagd2.eps}
  \end{tabular}
  \vspace*{-2ex}
  \caption{Illustration of theoretical guarantees for our algorithms (MP equipped with 
    two convex optimizers---AGD and SVRG) and the comparisons with minibatch SGD, in terms 
    of number of gradient steps
    vs. the minibatch size $b$ (left plot), and number of gradient steps
    vs. the total number of vector operations and gradient
    calculations. Here $\Delta = \phi(\w_0) - \phi^*$. We have assumed for MP+SVRG that 
    the parallel runtime for computing the gradient on a minibatch of $b$ samples (with 
    communications) is much larger than computing the gradient on a single example locally, 
    see discussion in Section~\ref{sec:distributed}. 
    The regimes that are not sample-efficient are dotted.}
  \label{f:mpagd}
\end{figure}

\subsection{Our contributions}
\label{sec:contributions}

We propose stochastic approximation algorithms that provably converge
to an approximate critical point of the nonconvex population
objective. In our template algorithm, which we refer to as
``minibatch-prox'' (MP), we draw $b$ fresh samples at each iteration
and approximately optimize a convex objective defined by these
samples:
\begin{align*}
  \w_t \approx \argmin_{\w}\; \frac{1}{b} \sum_{i=1}^b \ell (\w,\xi_i^t) + \frac{\gamma}{2} \norm{\w - \w_{t-1}}^2 \qquad \text{where}\quad \gamma > \sigma,\quad \text{for}\quad t=1,\dots,K.
\end{align*}
Choosing a random iterate $\w$ from the first $K$ iterations of MP, we have that (for large enough $b$)
\begin{align} \label{e:minibatch-prox-rate}
  \bbE \norm{\nabla \phi (\w) }^2 
  \le \calO \rbr{ \frac{\sigma (\phi (\w_0) - \phi^*)}{K} + \frac{V \sqrt{ \beta (\phi (\w_0) - \phi^*)}}{\sqrt{bK}} }.
\end{align}
MP can use larger minibatch while maintaining sample efficiency, at
the cost of more complicated operations (solvings nonlinear
optimization problems) on each minibatch: With a minibatch size of $\Theta \rbr{\frac{\beta V^2}{\sigma \varepsilon^2}}$, MP solves $\calO \rbr{ \frac{\sigma (\phi(\w_0)-\phi^*)}{\varepsilon^2} }$ convex subproblems. 

MP is a meta algorithm and allows us to plug in different optimizers
for solving the convex subproblems on each minibatch. In particular,
when accelerated gradient descent is used as the optimizer, the total
number of gradient steps in MP+AGD with a minibatch of size $\Theta \rbr{\frac{\beta V^2}{\sigma \varepsilon^2}}$ is
\begin{align*}
  T_{\MPAGD} = \tilde{\calO} \rbr{ \frac{ \sqrt{\sigma \beta} (\phi(\w_0)-\phi^*)}{\varepsilon^2} } .
\end{align*}
This significantly improves $T_{\RSAG}$ in~\eqref{e:rsag-complexity}
when $\beta \gg \sigma$, and this is achieved at the cost of a larger
total computational cost (\ie, total number of vector operations and
gradient computations).  The comparison is depicted in
Figure~\ref{f:mpagd}.  In particular, the right panel of the Figure we
see how RSAG and MP+AGD compare in terms of the parallel runtime
(number of gradient steps) and total work (number of individual
gradient computations).  The Pareto optimal point for these two
resources using RSAG is given by $T_{\RSAG}$ and $N(\varepsilon)$
respectively.  MP+AGD does not dominate RSAG, but rather allows us
reduce the parallel runtime at the cost of increasing the total
computation cost, down to a minimum parallel runtime of
$T_{\MPAGD}$.  This is the same parallel runtime
required by the batch methods, but we dominate them since to achieve
this runtime, we still require significantly less total computation.

We also analyze the use of other convex optimizers in MP, and
develop a memory-efficient version for it when the total number of
samples used in each convex subproblem is very large.

Our results have important implications for parallel or distributed
learning: they suggest that it is possible to significantly reduce
parallel runtime, but that this requires going beyond the
minibatch SGD paradigm of using each minibatch only once.

\section{Minibatch-prox (MP) for nonconvex smooth loss}
\label{sec:minibatch}

In this section, we first review the fundamental convexification step in our algorithms which allows us to find approximate critical point by solving convex subproblems, and then propose the basic version of our algorithm and analyze its convergence properties.

\subsection{Convexification of nonconvex problems}
\label{sec:reduction}

The key ingredient in our algorithms is the reduction from the optimization of a nonconvex objective into the optimization of a series of convex problems~\citep{Bertsek79a,Bertsek99a,Carmon_17a,Allen-Zhu17a}. 
Consider the following iterative procedure: for $t=1,\dots,K$,
\begin{align} \label{e:reduction}
  \w_t \approx \w_t^* = \argmin_{\w}\; F_t (\w)  \quad{where}\quad F_t (\w):=\phi(\w)+\frac{\gamma}{2} \norm{\w - \w_{t-1}}^2
\end{align}
where $\gamma > \sigma$. 
At each iteration of this algorithm, one approximately minimizes a regularized objective, where the $\ell_2$ proximity term encourages the new iterate to be close to the previous iterate. With the regularization term, the objective $F_t(\w)$ is $(\gamma - \sigma)$-strongly convex and the global minimizer $\w_t^*$ is unique.  Similar procedures have also been used when $\phi(\w)$ is convex to speedup first order methods~\citep{Lin_15a}.

We can quantify the number of convex subproblems to be solved in~\eqref{e:reduction}, so as to find an approximate critical point of $\phi(\w)$. 
Assume for now that we always obtain the exact minimizer of the subproblem~\eqref{e:reduction} at each iteration $t$, \ie, $\w_t=\w_t^*$. Then we have by the first order optimality of $\w_t^*$ that $\nabla \phi(\w_t) = \gamma (\w_{t-1} - \w_t)$. And in view of this optimality condition, $\w_t$ is an approximate critical point if $\norm{\w_{t-1} - \w_t} \le \varepsilon / \gamma$. On the other hand, if $\norm{\w_{t-1} - \w_t} > \varepsilon / \gamma $, we expect to achieve large reduction in $\phi(\w)$: owing to the strong convexity of $F_t(\w)$, we have $F (\w_{t-1}) - F (\w_t) \ge \frac{\gamma - \sigma}{2} \norm{\w_{t-1}-\w_t}^2$, which is equivalent to
\begin{align*}
  \phi (\w_{t-1}) - \phi (\w_t) \ge \frac{2 \gamma - \sigma}{2} \norm{\w_{t-1} - \w_t}^2
  >  \frac{\varepsilon^2}{2 \gamma}.
\end{align*}
We can not keep decreasing the objective in this way for more than $\frac{2 \gamma \rbr{\phi (\w_0) - \phi^*}}{\varepsilon^2}$ iterations.

The following simple lemma makes this intuition more precise when we have an approximate minimizer at each iteration. A similar and more general result (which applies to constrained optimization/composite objectives) can be found in~\citet[Lemma~4.1]{Allen-Zhu17a}.
\begin{lem}\label{lem:stopping}
  Let $F_t(\w)$ be defined in~\eqref{e:reduction}. If we apply a possibly randomized algorithm $\calA$ to obtain an approximate minimizer $\w_t$ satisfying
  \begin{align} \label{e:error-at-iteration-t}
    \bbE_{\calA} \sbr{ F_t (\w_t) - F_t (\w_t^*) \,|\, \w_{t-1} } \le \epsilon, 
  \end{align}
  then we have 
  \begin{align} \label{e:gradient-at-iteration-t}
    \bbE_{\calA} \norm{ \nabla \phi (\w_t) }^2 \le 2 \gamma^2 \bbE_{\calA} \norm{\w_{t-1} - \w_t}^2 + 4 (\beta + \gamma) \epsilon.
  \end{align}
\end{lem}

  \begin{cor} \label{cor:reduction}
     If we run the iterative procedure~\eqref{e:reduction} for $K$ iterations, optimizing each $F_t (\w)$ to $\epsilon$-suboptimality with an algorithm $\calA$, \ie, $\bbE_{\calA} \sbr{F_t (\w_t) - F_t (\w_t^*) \,|\, \w_{t-1}} \le \epsilon$ for $t=1,\dots,K$, and pick an iteration index $R \in \cbr{1,\dots,K}$ uniformly at random, we have
    \begin{align*}
      \bbE_{R,\calA} \norm{\nabla \phi(\w_R)}^2  \le
      \frac{ 4 \gamma \rbr{\phi (\w_0) - \phi^*}}{K}  + (4 \beta + 8 \gamma) \epsilon.
    \end{align*}
  \end{cor}

\subsection{The basic MP algorithm}
\label{sec:mp-smooth-stability}

We now turn to solving the subproblems in~\eqref{e:reduction}, \ie, to minimizing the \emph{stochastic} convex objective $F_t (\w)$, to which we only have access through i.i.d. samples of the underlying distribution. 
A natural approach for this problem is through (approximate) empirical risk minimization: we draw $b$ i.i.d. samples $Z_t=\cbr{\xi_1^t, \dots, \xi_b^t}$ and compute
\begin{align} \label{e:mp-at-iteration-t}
\text{(MP)}\quad
  \w_t \approx \hw_t = \argmin_{\w}\; \hF_t (\w) \quad \text{where} \; 
  \hF_t (\w) := \frac{1}{b} \sum_{i=1}^b \ell (\w,\xi_i^t) + \frac{\gamma}{2} \norm{\w - \w_{t-1}}^2.
\end{align}
Since this procedure optimizes on a minibatch of samples the nonlinearized loss (as opposed to linearized loss which leads to minibatch SGD), we call this procedure ``minibatch-prox'' (MP). Similar algorithms were proposed previously for optimizing convex problems: a version of this algorithm with $b=1$ was studied under the names ``passive aggressive'' update~\citep{Crammer_06a} and ``implicit gradient descent''~\citep{KulisBartlet10a} for online learning, was used for optimizing finite-sum objectives~\citep{Bertsek15a,Defazio16a}, and the version of large $b$ was more recently used as the building block to develop communication-efficient distributed algorithms for stochastic convex optimization~\citep{Li_14e,Wang_17b}.

In order to ensure small suboptimality in the population objective $F_t (\w)$ as required by Corollary~\ref{cor:reduction}, we need to bound the difference between the empirical and the population objectives at $\w_t$, or in other words, the generalization performance of $\w_t$. 
In the following lemma, we provide the generalization guarantee based on the notion of stability~\citep{BousquetElisseef02a,Shalev_09a}. Our result establishes the connection between the stability of ERM and the variance of stochastic gradients, which have been two major and seemingly parallel assumptions for deriving stochastic learning guarantees, and this formal connection appears to be new in the literature.

\begin{lem} \label{lem:stability}
  Consider the stochastic optimization problem 
  \begin{align*}
    F (\w) := \phi (\w) +  r (\w) = \bbE_{\xi} \left[\ell (\w, \xi) \right]  + r (\w)
  \end{align*}
  where the instantaneous loss $\ell (\w, \xi)$ is $\sigma$-almost convex and $\beta$-smooth in $\w$, and satisfies the variance condition $\bbE_{\xi} \norm{\nabla \ell (\w,\xi) - \nabla \phi (\w) }^2 \le V^2$, and the data-independent regularizer $r (\w)$ is $\gamma$-strongly convex with $\gamma > \sigma$. Denote $\w^*=\argmin_{\w}\; F(\w)$. 

  Let $Z=\{ \xi_1,\dots,\xi_b \}$ be i.i.d. samples and 
  \begin{align*}
    \hw = \argmin_{\w} \ \hF (\w) \qquad \text{where}\quad \hF (\w) := \hphi (\w) + r (\w) = \frac{1}{b} \sum_{i=1}^b \ell (\w, \xi_i)  + r (\w).
  \end{align*}

  Assume that $(\gamma - \sigma) b \ge 2 (\sigma + \beta)$. Then the following stability results hold.
  \begin{enumerate}
  \item For the regularized empirical risk minimizer $\hw$, we have 
    \begin{align*}
      \bbE_{Z} \sbr{F (\hw) - F (\w^*)} \le 
      \bbE_{Z} \sbr{ \phi (\hw) - \hphi (\hw) } \le \frac{8 V^2}{(\gamma - \sigma) b}. 
    \end{align*}
  \item If a possibly randomized algorithm $\calA$ minimizes $\hF (\w)$ up to $\delta$-suboptimality, \ie, $\calA$ returns an approximate solution $\tw$ such that
    \begin{align*}
      \bbE_{Z,\calA} \sbr{ \hF (\tw) -  \hF (\hw) } \le \delta,
    \end{align*}
    we have
    \begin{align*}
    \bbE_{Z,\calA} \sbr{F (\tw) - F (\hw)}
    & \le  \frac{8 V^2}{(\gamma - \sigma) b} +  \frac{2 (\beta+\gamma) \delta}{\gamma - \sigma}, \\
      \bbE_{Z,\calA} \sbr{F (\tw) - F (\w^*)}
      & \le  \frac{16 V^2}{(\gamma - \sigma) b} +  \frac{2 (\beta+\gamma) \delta}{\gamma - \sigma} .
    \end{align*}
  \end{enumerate}
\end{lem}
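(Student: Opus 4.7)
The plan is to bound $\bbE_Z[\phi(\hw) - \hphi(\hw)]$ through an algorithmic stability argument, from which the first inequality of part 1 follows by optimality of $\hw$ for $\hF$; part 2 then follows from a proximity identity that isolates the role of $\delta$.

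For the first inequality of part 1, since $r$ is data-independent, $\bbE_Z[\hF(\w^*)] = F(\w^*)$, and $\hF(\hw) \le \hF(\w^*)$ pointwise yields
\begin{align*}
\bbE_Z \sbr{ F(\hw) - F(\w^*) } = \bbE_Z \sbr{ F(\hw) - \hF(\hw) } + \bbE_Z \sbr{ \hF(\hw) - \hF(\w^*) } \le \bbE_Z \sbr{ \phi(\hw) - \hphi(\hw) }.
\end{align*}
For the second inequality, I would use the standard replace-one-sample symmetrization: let $\xi_i'$ be a fresh i.i.d.\ copy of $\xi_i$ and $\hw^{(i)}$ the minimizer of the empirical objective $\hF^{(i)}$ where $\xi_i$ is swapped for $\xi_i'$. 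Since $(Z, \xi_i')$ and $(Z^{(i)}, \xi_i)$ have the same joint law and $\xi_i'$ is independent of $\hw$,
\begin{align*}
\bbE \sbr{ \phi(\hw) - \hphi(\hw) } = \frac{1}{b}\sum_{i=1}^b \bbE \sbr{ \ell(\hw^{(i)}, \xi_i) - \ell(\hw, \xi_i) }.
\end{align*}

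The key stability estimate comes from $\nabla \hF(\hw) = 0$, which gives $\nabla \hF^{(i)}(\hw) = \frac{1}{b}(\nabla \ell(\hw, \xi_i') - \nabla \ell(\hw, \xi_i))$; combining with the Polyak--Lojasiewicz bound for the $(\gamma-\sigma)$-strongly convex $\hF^{(i)}$ one obtains
\begin{align*}
\|\hw - \hw^{(i)}\|^2 \le \frac{2}{\gamma-\sigma}\sbr{\hF^{(i)}(\hw) - \hF^{(i)}(\hw^{(i)})} \le \frac{\|\nabla \ell(\hw, \xi_i') - \nabla \ell(\hw, \xi_i)\|^2}{(\gamma-\sigma)^2 b^2}.
\end{align*}
Taking expectation, I would bound the numerator by triangle decompositions through $\nabla \phi(\hw)$ (using that $\xi_i'$ is independent of $\hw$, which gives a $V^2$ contribution) and through $\nabla \phi(\hw^{(i)})$ (using that $\xi_i$ is independent of $\hw^{(i)}$, plus $\beta$-smoothness to convert the anchor change into a $\beta^2\bbE\|\hw-\hw^{(i)}\|^2$ term). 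This yields a recursive inequality for $\bbE\|\hw - \hw^{(i)}\|^2$ that closes under $(\gamma-\sigma)b \ge 2(\sigma+\beta)$ to give $\bbE\|\hw - \hw^{(i)}\|^2 = O(V^2/((\gamma-\sigma)^2 b^2))$. Plugging this into the $\beta$-smoothness upper bound $\ell(\hw^{(i)}, \xi_i) - \ell(\hw, \xi_i) \le \dotp{\nabla \ell(\hw, \xi_i)}{\hw^{(i)} - \hw} + \frac{\beta}{2}\|\hw^{(i)} - \hw\|^2$ and absorbing the linear cross term via Young's inequality delivers $\bbE[\phi(\hw) - \hphi(\hw)] \le \frac{8V^2}{(\gamma-\sigma)b}$.

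For part 2 I would use the identity $F(\tw) - F(\hw) = (\hF(\tw) - \hF(\hw)) + [(\phi(\tw) - \hphi(\tw)) - (\phi(\hw) - \hphi(\hw))]$, bound the first term by $\delta$ in expectation over $\calA$, and control the bracketed generalization-gap drift by a Taylor expansion around $\hw$: since $\phi - \hphi$ has $(2\beta)$-Lipschitz gradient and $r$ contributes nothing to the difference, the drift is at most $\dotp{\nabla \phi(\hw) - \nabla \hphi(\hw)}{\tw - \hw} + \beta\|\tw - \hw\|^2$, which via Young's inequality combined with the strong-convexity estimate $\|\tw - \hw\|^2 \le 2\delta/(\gamma-\sigma)$ collapses into the claimed $\frac{2(\beta+\gamma)\delta}{\gamma-\sigma}$ term; adding $\bbE[F(\hw) - F(\w^*)]$ from part 1 yields the final bound with the factor of $2$ in front of $\frac{8V^2}{(\gamma-\sigma)b}$. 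The hard part is closing the stability recursion with sharp enough constants: $\bbE\|\hw - \hw^{(i)}\|^2$ appears on both sides of the intermediate bound because $\hw$ depends on $\xi_i$ so that $\bbE\|\nabla \ell(\hw, \xi_i) - \nabla \phi(\hw)\|^2$ cannot be directly bounded by $V^2$, and the choice of decomposition and Young parameters must be tuned to match the stated condition $(\gamma-\sigma)b \ge 2(\sigma+\beta)$.
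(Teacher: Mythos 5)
Your setup, the first inequality of part~1, and the identity $\nabla\hF^{(i)}(\hw)=\frac{1}{b}\rbr{\nabla\ell(\hw,\xi_i^\prime)-\nabla\ell(\hw,\xi_i)}$ (hence the bound $\norm{\hw-\hw^{(i)}}^2\le\frac{\norm{\nabla\ell(\hw,\xi_i^\prime)-\nabla\ell(\hw,\xi_i)}^2}{(\gamma-\sigma)^2b^2}$ via gradient dominance for the $(\gamma-\sigma)$-strongly convex $\hF^{(i)}$) are all correct. But the way you propose to finish part~1 has a genuine gap, and it is not merely a matter of tuning Young parameters. You want to bound $\bbE\sbr{\ell(\hw^{(i)},\xi_i)-\ell(\hw,\xi_i)}$ by the smoothness upper bound $\dotp{\nabla\ell(\hw,\xi_i)}{\hw^{(i)}-\hw}+\frac{\beta}{2}\norm{\hw^{(i)}-\hw}^2$ and then Young the cross term. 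That cannot work, because $\nabla\ell(\hw,\xi_i)$ has no small bound: Young produces a $\frac{1}{2a}\norm{\nabla\ell(\hw,\xi_i)}^2$ term, and the ``systematic'' part $\nabla\phi(\hw)$ of this gradient is not controlled by $V^2$ or anything else in the assumptions (the lemma deliberately avoids any Lipschitz or boundedness condition on $\nabla\ell$). This large piece must cancel, and the paper arranges the cancellation by pairing the two swap directions: it works with the symmetrized quantity $\rbr{\ell(\hw^{(i)},\xi_i)-\ell(\hw,\xi_i)}+\rbr{\ell(\hw,\xi_i^\prime)-\ell(\hw^{(i)},\xi_i^\prime)}$, whose expectation equals $2\,\bbE\sbr{\phi(\hw)-\hphi(\hw)}$, and applies $\sigma$-almost convexity to the first term and $\beta$-smoothness of $\phi$ to merge the paired $\nabla\phi$ contributions into $\dotp{\nabla\phi(\hw^{(i)})-\nabla\phi(\hw)}{\hw^{(i)}-\hw}\le\beta\norm{\hw^{(i)}-\hw}^2$, leaving only the two centered gradient deviations $\nabla\ell(\hw^{(i)},\xi_i)-\nabla\phi(\hw^{(i)})$ and $\nabla\ell(\hw,\xi_i^\prime)-\nabla\phi(\hw)$, each of variance $V^2$.

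A second, related problem is that your recursion for $\bbE\norm{\hw-\hw^{(i)}}^2$ has self-term coefficient of order $\beta^2/\rbr{(\gamma-\sigma)b}^2$, because passing gradients through $\hw\mapsto\hw^{(i)}$ costs a factor of $\beta$ \emph{inside} a squared norm. This only closes when $(\gamma-\sigma)b$ exceeds roughly $2\beta$ with room to spare, and the resulting constant blows up as $\sigma\to 0$ even though the lemma's hypothesis $(\gamma-\sigma)b\ge 2(\sigma+\beta)$ allows $(\gamma-\sigma)b$ to be as small as $2\beta$ in that limit. The paper instead obtains an inequality that is \emph{linear} in $\norm{\hw^{(i)}-\hw}$ (from the two-sided strong-convexity comparison $\hF(\hw^{(i)})-\hF(\hw)\ge\frac{\gamma-\sigma}{2}\norm{\hw^{(i)}-\hw}^2$ combined with the reverse bound through $\hF^{(i)}$), with a self-term coefficient $(\sigma+\beta)/\rbr{(\gamma-\sigma)b}$ that is exactly $\le\frac12$ under the stated hypothesis; this is why the constants come out clean. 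Your part~2 decomposition is fine in spirit and is essentially the paper's once you note $\nabla\phi(\hw)-\nabla\hphi(\hw)=\nabla F(\hw)$ (because $\nabla\hF(\hw)=\0$) and bound $\norm{\nabla F(\hw)}^2\le 2(\beta+\gamma)\rbr{F(\hw)-F(\w^*)}$; but be aware that Young then produces \emph{both} a $V^2/b$ and a $\delta$ contribution, so it does not ``collapse into the claimed $\delta$ term'' on its own --- you must feed in part~1.
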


The above lemma shows that, the convergence rate of the stochastic objective $F_t (\w)$ by ERM is of the order $\calO \rbr{\frac{1}{b}}$, and thus in order to achieve $\epsilon$-suboptimality in $F_t (\w)$, it suffices to exactly solve the ERM problem defined by $\calO \rbr{\frac{1}{\epsilon}}$ samples. Moreover, the second part of Lemma~\ref{lem:stability} shows that as long as we minimize the ERM objective $\hF_t (\w)$ to suboptimality $\delta=\calO (\epsilon)$, the population suboptimality remains of the order $\calO (\epsilon)$. Allowing inexact minimization enables us to use state-of-the-art methods for convex optimization. 

\begin{rmk}[Relaxation of individual almost convexity]
\label{rmk:relaxation}
From the proof of Lemma~\ref{lem:stability}, we observe that the most important usage of the almost convexity is to ensure that the regularized empirical objective $\hF (\w)$ is $(\gamma - \sigma)$-strongly convex; as long as this holds, we obtain the $\Omega \rbr{ \frac{V^2}{(\gamma - \sigma) b} }$ stability for $b \ge \frac{4 \beta}{\gamma - \sigma}$, without almost convexity of the instantaneous losse.
As a result, we may relax our assumption to the population loss $\phi(\w)$ being $\sigma$-almost convex. Based on the $\beta$-smoothness of $\ell (\w, \xi)$ and matrix concentration, with high probability it holds that $\hphi (\w)$ is $\Omega (\sigma)$-almost convex as long as $b  \succeq \frac{\beta^2 \log d}{\sigma^2}$, where $d$ is the dimensionality of $\w$. We will see shortly that our optimal minibatch size shall increase with the final accuracy, so as $\varepsilon \rightarrow 0$, we indeed have that all subproblems are sufficiently strongly convex and consequently, the same results hold with high probability.
\end{rmk}

\subsection{Convergence of the basic MP algorithm}

We are now ready to analyze the convergence property of MP. 

\begin{thm}\label{thm:convergence} Set $\gamma = \sigma + \sqrt{\frac{32 (\beta + 2 \sigma) V^2 K}{\rbr{\phi (\w_0) - \phi^*} b}}$ in the minibatch-prox algorithm~\eqref{e:mp-at-iteration-t}. 
And assume that for each iteration $t$, we draw $b$ samples to approximate $F_t(\w)$ with $b \ge \frac{ 2 (\sigma + \beta) }{\gamma - \sigma}$, and minimize the ERM objective $\hF_t (\w)$ using a randomized algorithm $\calA$, such that $\bbE_{Z_t, \calA} \sbr{ \hF_t (\w_t) - \hF_t (\hw_t) \,|\, \w_{t-1}} \le \delta = \frac{8V^2}{(\beta+\gamma)b}$. If we pick an iteration index $R \in \cbr{1,\dots,K}$ uniformly at random, we have
\begin{align*}
  \bbE_{R,\calA} \norm{\nabla \phi(\w_R)}^2 
  \le  \frac{ 4 \sigma (\phi (\w_0) - \phi^*) }{K} +  \frac{256 V^2}{b} 
  + \frac{32 V \sqrt{ (2\beta+4\sigma) (\phi (\w_0) - \phi^*)} }{\sqrt{b K}}
\end{align*}
\end{thm}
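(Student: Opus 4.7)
The plan is to chain the two results already established in Section~\ref{sec:minibatch}: Lemma~\ref{lem:stability} converts minibatch ERM suboptimality into suboptimality on the stochastic subproblem $F_t(\w)$, and Corollary~\ref{cor:reduction} then converts per-subproblem suboptimality into a bound on $\bbE\norm{\nabla\phi(\w_R)}^2$ for the random iterate. The choice of $\gamma$ stated in the theorem should fall out of balancing two terms by AM--GM at the very end.

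\textbf{Step 1 (applying stability).} Since the regularizer $\tfrac{\gamma}{2}\norm{\w-\w_{t-1}}^2$ in $F_t$ is data-independent and $\gamma$-strongly convex, and the hypothesis $b\ge 2(\sigma+\beta)/(\gamma-\sigma)$ is precisely the condition $(\gamma-\sigma)b\ge 2(\sigma+\beta)$ required by Lemma~\ref{lem:stability}, I can invoke its second part at each iteration $t$. With the prescribed inner accuracy $\delta=\tfrac{8V^2}{(\beta+\gamma)b}$, the two summands in the bound conveniently coincide in magnitude, giving
\begin{equation*}
\bbE_{Z_t,\calA}\sbr{F_t(\w_t)-F_t(\w_t^*)} \;\le\; \frac{16V^2}{(\gamma-\sigma)b}+\frac{2(\beta+\gamma)\delta}{\gamma-\sigma} \;=\; \frac{32V^2}{(\gamma-\sigma)b} \;=:\; \epsilon.
\end{equation*}
Taking expectation over the independent randomness across iterations, $\epsilon$ is the same per-iteration bound demanded by Corollary~\ref{cor:reduction}.

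\textbf{Step 2 (applying the reduction and optimizing $\gamma$).} Plugging this $\epsilon$ into Corollary~\ref{cor:reduction} yields
\begin{equation*}
\bbE_{R,\calA}\norm{\nabla\phi(\w_R)}^2 \;\le\; \frac{4\gamma(\phi(\w_0)-\phi^*)}{T}+(4\beta+8\gamma)\cdot\frac{32V^2}{(\gamma-\sigma)b}.
\end{equation*}
To isolate the dependence on $\gamma-\sigma$ from that on $\sigma$ itself, I split $4\gamma=4\sigma+4(\gamma-\sigma)$ in the first term and $4\beta+8\gamma\le 4(\beta+2\sigma)+8(\gamma-\sigma)$ in the second. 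The $8(\gamma-\sigma)$ piece cancels $(\gamma-\sigma)$ in the denominator and produces the clean residual $\tfrac{256V^2}{b}$, while the remaining pair
\begin{equation*}
\frac{4(\gamma-\sigma)(\phi(\w_0)-\phi^*)}{T} \;+\; \frac{128(\beta+2\sigma)V^2}{(\gamma-\sigma)b}
\end{equation*}
is minimized by AM--GM at exactly $\gamma-\sigma=\sqrt{32(\beta+2\sigma)V^2 T/\rbr{(\phi(\w_0)-\phi^*)b}}$, which is the value chosen in the theorem statement. At this point both terms equal $16\sqrt{2}\,V\sqrt{(\beta+2\sigma)(\phi(\w_0)-\phi^*)/(bT)}$, and their sum matches the $\tfrac{32V\sqrt{(2\beta+4\sigma)(\phi(\w_0)-\phi^*)}}{\sqrt{bT}}$ term in the target bound. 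Adding back the leftover $\tfrac{4\sigma(\phi(\w_0)-\phi^*)}{T}$ and the $\tfrac{256V^2}{b}$ residual produces the stated inequality.

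\textbf{Main obstacle.} Essentially all the mathematical content has been done in Lemma~\ref{lem:stability} and Corollary~\ref{cor:reduction}; the remaining work is careful bookkeeping of constants. The only nonobvious choice is the specific $\delta$, which is picked precisely to make the two summands in Lemma~\ref{lem:stability} balance, and the specific $\gamma$, which balances the resulting two terms in Corollary~\ref{cor:reduction}. I would double-check that the hypothesis $b\ge 2(\sigma+\beta)/(\gamma-\sigma)$ is consistent with the chosen $\gamma$ for the parameter regime of interest (large $T$, moderate $b$), but no new inequality is needed.
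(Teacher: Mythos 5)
Your proof is correct and is essentially identical to the paper's: you invoke the second part of Lemma~\ref{lem:stability} with the prescribed $\delta$ to obtain $\bbE\sbr{F_t(\w_t)-F_t(\w_t^*)}\le \tfrac{32V^2}{(\gamma-\sigma)b}$, feed this into Corollary~\ref{cor:reduction}, split off $4\sigma$ and $4(\beta+2\sigma)$ so the $8(\gamma-\sigma)$ piece cancels to give $\tfrac{256V^2}{b}$, and minimize the remaining $Ax+B/x$ over $x=\gamma-\sigma$, exactly as the paper does. The only cosmetic slip is writing ``$4\beta+8\gamma\le 4(\beta+2\sigma)+8(\gamma-\sigma)$'' where it is in fact an equality.
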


We now add a few remarks regarding the convergence result in Theorem~\ref{thm:convergence}. First, the algorithm does not converge to an approximate critical point for very small $b$, and in fact to satisfy~\eqref{e:critical} it is necessary to have $b \succeq \frac{V^2}{\varepsilon^2}$. 
But once 
$b\succeq \frac{V^2 K}{\beta (\phi(\w_0)-\phi^*)}$, the second term $\frac{256 V^2}{b}$ is dominated by the other two terms, and we obtain the convergence guarantee
  \begin{align*}
  \bbE_{R,\calA} \norm{\nabla \phi(\w_R)}^2 
  \le  \calO \rbr{ \frac{  \sigma (\phi (\w_0) - \phi^*) }{K} 
    +  \frac{V \sqrt{\beta (\phi (\w_0) - \phi^*)} }{\sqrt{b K}}  }.
\end{align*}
Compare this with the convergence rate of RSAG given in~\eqref{e:rsag}. We note that, while the second term (``sample error'') is of the same order for both methods, the first term (``optimization error'') in our method depends on $\sigma$ instead of $\beta$. The first term also agrees with the number of subproblems resulted from the convexification procedure (cf. the discussion in Section~\ref{sec:reduction}).

Further assume that the second term is dominant, which is true as long as $b =\calO \rbr{ \frac{\beta V^2 K}{\sigma^2 (\phi (\w_0) - \phi^*)} }$, then to find an critical point satisfying~\eqref{e:critical}, the sample complexity is $N(\varepsilon)=\calO \rbr{ \frac{V^2 \beta (\phi (\w_0) - \phi^*)}{\varepsilon^4} }$, while the maximum minibatch size (that maintains sample efficiency) and the iteration complexity using this minibatch size are respectively
\begin{align} \label{e:mp-complexity}
  b_{\MP}=\calO \rbr{ \frac{ \sqrt{\beta} V \sqrt{N(\varepsilon)}}{\sqrt{\sigma^2 (\phi (\w_0) - \phi^*)}} } = \calO \rbr{\frac{\beta V^2}{\sigma \varepsilon^2}},
  \qquad 
  K_{\MP}=\calO \rbr{ \frac{\sigma (\phi(\w_0)-\phi^*)}{\varepsilon^2} }.
\end{align}
Increasing the minibatch size reduces the number of iterations in the
regime of $b \le b_{\MP}$.  Therefore, MP achieves the same sample
error as RSAG using the same level of samples, but when $\sigma \ll
\beta$, MP allows us to use much larger minibatch size and a smaller
number of minibatches.  The caveat here, which we will address in the
next section, is that MP requires solving an optimization problem on
each minibatch, as opposed to performing a single gradient step.

\paragraph{Stepsize} By the optimality condition of~\eqref{e:mp-at-iteration-t}, we have $\w_t \approx \w_{t-1} - \frac{1}{\gamma} \rbr{ \frac{1}{b} \sum_{i=1}^b \nabla \ell(\w_t, \xi_i^t)}$. This update resembles that of minibatch SGD, except that the gradient is evaluated at the ``future'' iterate. Moreover, according to Theorem~\ref{thm:convergence}, the ``stepsize'' $\frac{1}{\gamma}$ roughly varies like $\sqrt{\frac{b}{K}}$ (this approximation is more accurate for smaller $b$, in which case $\sigma\ll \gamma$), which scales with $b$ if the number of total samples $N$ is fixed, and scales with $\sqrt{b}$ if the number of iterations $K$ is fixed, consistent with the findings of~\citet{Goyal_17a} and~\citet{Hoffer_17a} respectively for minibatch SGD. 

\begin{rmk} \label{rmk:memory-efficient}
We have shown that it suffices to approximately minimize the stochastic objective $F_t (\w)$ with $\calO \rbr{\frac{1}{\varepsilon^2}}$ samples, by approximately minimizing the empirical objective $\hF_t (\w)$. But the number of samples can be too large (as $\varepsilon \rightarrow 0$) that the memory requirement is high, since we need to store this many samples and process them multiple times. In Appendix~\ref{sec:memory-efficient}, we provide a modified algorithm to resolve this issue, which achieves the same learning guarantee with the same level of total samples, and using any (sufficiently large) minibatch size.
\end{rmk}

We demonstrate our theory and the MP algorithm on deep neural networks training in Appendix~\ref{sec:expt}.

\section{Distributed implementation of MP}
\label{sec:distributed}

In the previous Section, we presented the template algorithm MP.  But
MP requires solving an optimization problem on a mini-batch at each
iteration.  We now instantiate the algorithm by suggesting specific
distributed procedures for solving this minibatch optimization
problem.  We consider two possible solvers for the convex subproblems
$\hF_t (\w)$, $t=1,\dots,K$ in MP, and discuss the resulting overall
parallel runtime and the total computational cost.

\begin{table}[t]
\centering
\caption{Comparisons between minibatch SGD and MP equipped with two convex optimizers, in terms of both the runtime and the total number of vector operations to find an approximate critical point. 
The runtime is measured by the number of batch gradient evaluations, each of which cost $\tau_b$ on a minibatch of $b$ samples, and the number of serial gradient descent updates, each of which cost $\tau_1$ on a single sample. 
Denote by $\Delta:=\phi(\w_0) - \phi^*$ the initial suboptimality. 
The (sufficient) sample complexity is denoted by $N (\varepsilon) = \calO \rbr{\frac{V^2 \beta \Delta}{\varepsilon^4}}$. We hide poly-logarithmic dependence on $(\sigma, \beta, V, \Delta, \varepsilon, b)$.  Regimes that are not sample-efficient are shadowed.}
\label{t:comparison}
\begin{tabular}{@{}|c|c|c|c|@{}}
\hline
& Minibatch size & Parallel runtime & \# vector operations \\
\hline
\hline

\multirow{3}{*}{RSAG} & 
$b\prec \frac{V^2}{\varepsilon^2}$ &
$\frac{N (\varepsilon)}{b} \times \tau_b$ & 
$N (\varepsilon)$ \\[.5ex]
\cline{2-4} &
$b \asymp \frac{V^2}{\varepsilon^2}$ &
$\frac{\beta \Delta}{\varepsilon^2} \times \tau_b$ & 
$N (\varepsilon)$  \\[.5ex]
 \cline{2-4} &
 \cellcolor{light-gray} $b \succ \frac{V^2}{\varepsilon^2}$ &
 \cellcolor{light-gray} $\frac{\beta \Delta}{\varepsilon^2} \times \tau_b$ & 
 \cellcolor{light-gray} $\frac{b \varepsilon^2}{V^2} \cdot N (\varepsilon)$  \\[.5ex]
\hline

\multirow{3}{*}{\caja{c}{c}{MP\\+ AGD}} &
$\frac{V^2}{\varepsilon^2} \preceq b \prec \frac{\beta V^2}{\sigma \varepsilon^2}$ &
$\sqrt{\frac{\varepsilon^2}{V^2 b}} \cdot N (\varepsilon) \times \tau_b$ & 
$\sqrt{\frac{b \varepsilon^2}{V^2}} \cdot N (\varepsilon)$ \\[.5ex]
\cline{2-4} & 
$b \asymp \frac{\beta V^2}{\sigma \varepsilon^2}$  &
$\frac{\sqrt{\sigma \beta} \Delta}{\varepsilon^2} \times \tau_b$ & 
$\sqrt{\frac{\beta}{\sigma}} \cdot N (\varepsilon)$ \\[.5ex]
\cline{2-4} & 
\cellcolor{light-gray} $b \succ \frac{\beta V^2}{\sigma \varepsilon^2}$  &
\cellcolor{light-gray} $\rbr{\frac{b \varepsilon^2}{V^2}}^{\frac{1}{4}} \cdot \frac{\sigma^{\frac{3}{4}} \beta^\frac{1}{4} \Delta}{\varepsilon^2} \times \tau_b$ & 
\cellcolor{light-gray} $\rbr{\frac{b \varepsilon^2}{V^2}}^{\frac{5}{4}} \rbr{\frac{\sigma}{\beta}}^{\frac{3}{4}} \cdot N (\varepsilon)$ \\[.5ex]
\hline

\multirow{3}{*}{\caja{c}{c}{MP\\+ SVRG}} & 
$\frac{V^2}{\varepsilon^2} \preceq b \prec \frac{\beta V^2}{\sigma \varepsilon^2}$ &
$ \frac{N (\varepsilon)}{b} \times \tau_b +  \frac{\beta \Delta}{\varepsilon^2} \times \tau_1$ & 
$N (\varepsilon)$ \\[.5ex]
\cline{2-4} & 
$b \asymp \frac{\beta V^2}{\sigma \varepsilon^2}$  &
$\frac{\sigma \Delta}{\varepsilon^2} \times \tau_b + \frac{\beta \Delta}{\varepsilon^2} \times \tau_1$ & 
$N (\varepsilon)$ \\[.5ex]
\cline{2-4} & 
\cellcolor{light-gray} $b \succ \frac{\beta V^2}{\sigma \varepsilon^2}$  &
\cellcolor{light-gray} $\frac{\sigma \Delta}{\varepsilon^2} \times \tau_b + \sqrt{\frac{b \varepsilon^2}{V^2}} \cdot \frac{\sqrt{\sigma \beta} \Delta}{\varepsilon^2} \times \tau_1$ & 
\cellcolor{light-gray} $\frac{b \varepsilon^2}{V^2} \cdot \frac{\sigma}{\beta} \cdot N (\varepsilon)$ \\[.5ex]
\hline
\end{tabular}
\end{table}

\paragraph{Accelerated gradient descent}
The first choice is the (distributed) accelerated gradient descent (AGD,~\citealp{Nester04a}), in which case MP uses the same minibatch gradients as minibatch SGD does. Observe that each $\hF_t(\w)$ is both $(\beta+\gamma)$-smooth and $(\gamma - \sigma)$-strongly convex, and by our choice of $\gamma$, its condition number is $\kappa=\frac{\beta+\gamma}{\gamma - \sigma}=\calO \rbr{\sqrt{\frac{\beta (\phi(\w_0) - \phi^*) b}{V^2 K}}}$ which increases with $b$. 
By the convergence rate of AGD, when minimizing $\hF_t(\w)$, the number of gradient descent updates needed to achieve $\delta$ suboptimality is $\calO \rbr{\sqrt{\kappa} \log \frac{1}{\delta}}$.
Consequently, the total number of gradient descent updates throughout the MP algorithm is\footnote{We use the $\tilde{\calO} (\cdot)$ notation to hide poly-logarithmic dependence on $(\sigma, \beta, V, \phi (\w_0) - \phi^*, \varepsilon, b)$.} $\tilde{\calO} \rbr{ K \cdot \sqrt{\kappa} }$ 
and the total number of vector operations by the algorithm is $\tilde{\calO} \rbr{ b \cdot K \cdot \sqrt{\kappa} }$. 
We provide the total number of gradient steps and the corresponding computational cost, as functions of the problem parameters, for different regimes of minibatch size in Table~\ref{t:comparison}. Most notably, when using the maximum minibatch size $b_{\MP}$, the total number of gradient descent updates is
\begin{align} \label{e:gradient-complexity}
T_{\MPAGD} = 
\tilde{\calO} \rbr{ \frac{\sqrt{\sigma \beta} (\phi (\w_0) - \phi^*)}{\varepsilon^2} },
\end{align}
which is asymptotically smaller than $T_{\RSAG}$. In the \emph{worst
  scenario} where $\sigma=\beta$, we perform roughly the same number
of gradient descent updates as RSAG.  But when $\sigma \ll \beta$, we
can significantly reduce the parallel runtime.

\paragraph{Distributed SVRG}
Another option for optimizing $\hF_t (\w)$ is the distributed SVRG
algorithm for finite-sum problems~\citep{Lee_16b,Shamir16c}.\footnote{Other parallel optimization
  frameworks, such as DANE~\citep{Shamir_14a} and AIDE~\citep{Reddi_16b}, can be applied and analyzed similarly.} 
This algorithm alternates over two types of operations: the evaluation of
batch gradient on $b$ samples, which can distributed into multiple
machines, and many serial stochastic gradient descent steps, each of
which uses gradient computed on a single sample on one local machine. 

Denote the time cost for the two types of access by $\tau_b$ and $\tau_1$ respectively. 
According to the convergence of distributed SVRG, the runtime needed to optimize each subproblem to sufficient accuracy is 
\begin{align*}
\tilde{\calO} \rbr{ \tau_b + \kappa \tau_1}.
\end{align*}
Similar to the case of AGD, we provide the runtime and the total
computational cost by MP+SVRG in Table~\ref{t:comparison}.

The parallel runtime of the two types of operations, calculating the
gradient of a mini-batch of size $b$ in parallel across machines, and
calculating the gradient on a single sample locally, are not directly
relate-able.  From a pure parallel computation perspective, one could
argue that $\tau_b$ does not depend much on $b$, since we can
distribute the computation across $b$ machines, and so $\tau_1 \approx
\tau_b$.  If this is the case, MP+SVRG does not provide any advantage
over RSAG, since MP+SVRG's parallel runtime will be dominates by
$\frac{\beta \Delta}{\epsilon^2}$, the same as RSAG's.  

However, more realistically, even in a parallel setting, we would
expect computing the gradient of a single point on a single machine,
without any communication, would be much quicker than parallel
computation of a minibatch.  If indeed $\tau_1 \ll \tau_b$, and in
particular 
\begin{equation}\label{e:tau1}
\tau_1 = \calO \rbr{\frac{\sigma}{\beta} \tau_b},
\end{equation} 
MP+SVRG dominates both MP+AGD {\em and} RSAG: the $\tau_b$ term is then the
dominant term in its parallel runtime, with an improvement over both
MP+SVRG, yet the total computational cost (number of individual
gradient computations) does not increase over the optimal cost of
RSAG, and is thus much smaller than MP+AGD---we get a reduction in
parallel runtime without any additional computational cost.  In
particular, with a minibatch of size $b_{\MP}$, and under the assumption~\eqref{e:tau1}, we get a parallel runtime of
\begin{equation}
  \label{e:TMPSVRG}
T_{\MPSVRG} = 
\tilde{\calO} \rbr{ \frac{\sigma(\phi (\w_0) - \phi^*)}{\varepsilon^2} },
\end{equation}
with optimal computational cost $N(\varepsilon)$.  The requirement
\eqref{e:tau1} is very reasonable, especially for large $b$,
considering the required ratio does not depend on $b$.  All we require
is that parallel computation of a gradient on a large minibatch is at
least a constant factor more expensive than an individual gradient
computation on one point.

\section{Discussion}
\label{sec:discussion}

In this work, we have focused on stochastic nonconvex optimization using only noisy first-order gradient information, and made a step toward large minibatch training. Our results suggest that it is beneficial to perform better optimization on each minibatch than a single gradient descent, when the minibatch size is too large to be sample-inefficient in minibatch SGD.

Unfortunately, we could not yet remove the ``optimization error'' altogether from the convergence rate as is achievable for stochastic convex optimization using minibatch-prox~\citep{Wang_17b}: in comparison to the convergence rate~\eqref{e:acc-minibatch-sgd-convex} by accelerated minibatch SGD, the minibatch-prox algorithm on a convex $\phi(\w)$ provides the guarantee 
$  \bbE \sbr{ \phi (\w) - \phi (\w_*) } \le \calO \rbr{  \frac{\norm{\w_0 - \w_*}}{\sqrt{b K}} } $, so that one could use any minibatch size $b$ while maintaining sample efficiency. 
Nor could we significantly reduce it as accelerated minibatch SGD achieved, again in the convex case (compare the first term in~\eqref{e:acc-minibatch-sgd-convex} and~\eqref{e:rsag}). We also do not know if the ``sample error'' is the statistical limit for the class of problems considered here, and if a refined analysis of minibatch SGD (that makes use of the $\sigma$-almost convexity) can show the same convergence rate, which would tell if our more complicated algorithms are indeed necessary. 
In combination with the convexification procedure,~\citet{Carmon_17a} additionally made use of curvature information in the Hessian (which can be efficiently obtained for deep learning models,~\citealp{Pearlm94a,Marten10a}) so as to further reduce the ``optimization error'': for their ERM algorithm, the number of gradient steps needed is $\calO \rbr{\frac{1}{\varepsilon^{7/4}}}$, rather than $\calO \rbr{\frac{1}{\varepsilon^2}}$ for minibatch SGD and our algorithms. We suspect that using the same technique in stochastic optimization may yield similar improvement, at the cost of a more complex algorithm. 





\section*{Acknowledgement}
Weiran Wang would like to thank Michael Maire for inspiring discussions on~\citet{Goyal_17a}, and Zeyuan Allen-Zhu for helpful discussions on the convexification procedure.

\appendix
\section{Experiments}
\label{sec:expt}

We now demonstrate our theory and the basic MP algorithm with an illustrative example. 
We train a neural network with $2$ tanh hidden layers of $512$ units each, and a softmax output layer to perform $10$-way digit classification on the infinite MNIST dataset~\citep{Loosli_07a}. 
The dataset is randomly split into $8\times 10^6$ samples for training and $10^5$ samples for testing. To mimic the stochastic setting, we allow each method to load the training set into memory only once (this is equivalent to a single training epoch for minibatch SGD).

\paragraph{Performance of minibatch SGD}
We carefully tune the training hyperparameters by grid search for minibatch SGD with momentum, which remains a very strong method for training deep models in practice: the fixed learning rate is selected from $\cbr{0.001,\, 0.01,\, 0.05,\, 0.1,\, 0.5}$, and the momentum parameter from $\cbr{0,\, 0.5,\, 0.8,\, 0.9,\, 0.99,\, 0.995}$.

We vary the minibatch size $b$ in $\cbr{200,\, 1000,\, 2000,\, 10000}$, and for each each $b$ select the optimal combination of learning rate and momentum based on the objective on the test set. The test set objective vs. number of samples processed for different minibatch sizes are given in Figure~\ref{f:SGD} (left plot). We note that this type of learning curve (or error vs. epoch) is typically used for evaluating learning methods (\eg, ~\citealp{Goyal_17a}), and is quite reasonable since the number of samples processed corresponds to the total energy spent. Observe that with smaller $b$, minibatch SGD converges to lower objective function values (although we have trained the neural network for only one epoch, the trained model with $b=200$ has a cross-entropy loss of $0.0026$ and a low classification error rate of $0.062\%$ on the test set). 
The difference in final test objectives is small for $b=200$ and $b=1000$,
but we start to see clear degradation of accuracy for $b=10000$.

\begin{figure}[t]
\centering
\psfrag{obj}[b][]{test objective}
\psfrag{samples}[][]{\# fresh samples}
\psfrag{updates}[][]{\# updates}
\psfrag{SGD, b=200}[l][l][0.75]{SGD, $b=200$}
\psfrag{SGD, b=1000}[l][l][0.75]{SGD, $b=1000$}
\psfrag{SGD, b=2000}[l][l][0.75]{SGD, $b=2000$}
\psfrag{SGD, b=4000}[l][l][0.75]{SGD, $b=4000$}
\psfrag{SGD, b=10000}[l][l][0.75]{SGD, $b=10000$}
\begin{tabular}{@{}cc@{}}
\includegraphics[width=0.490\linewidth]{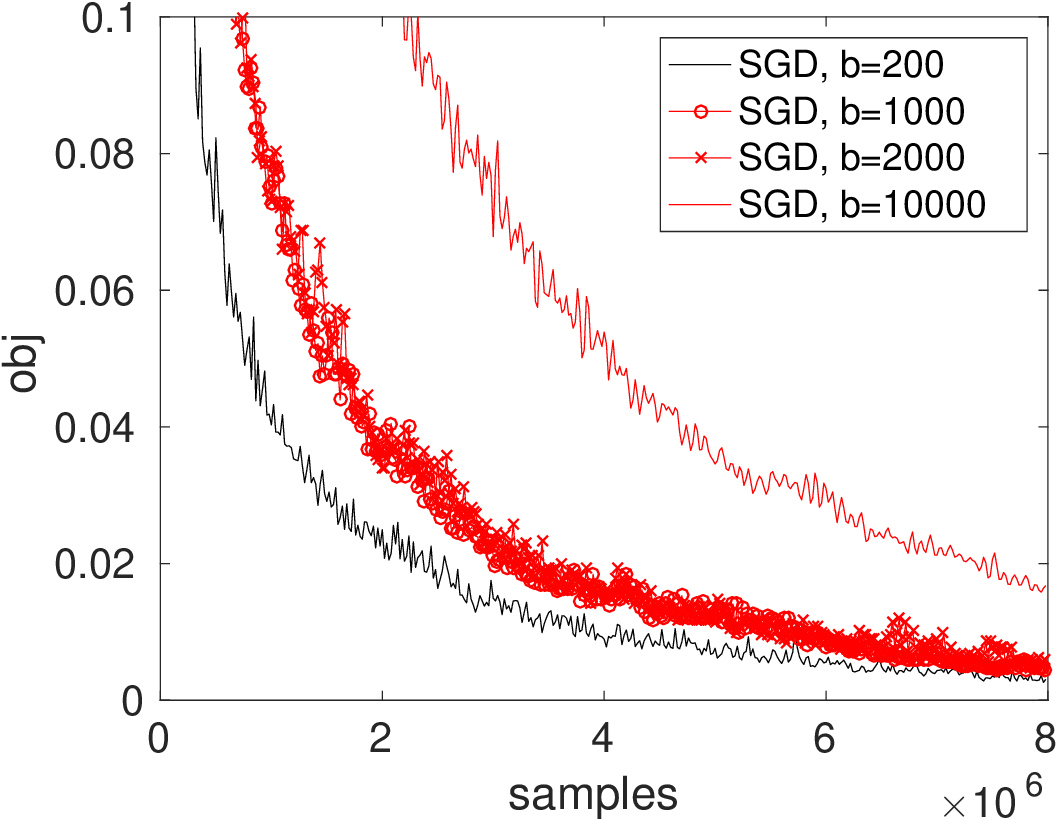} & 
\raisebox{0.01\linewidth}{
\includegraphics[width=0.500\linewidth]{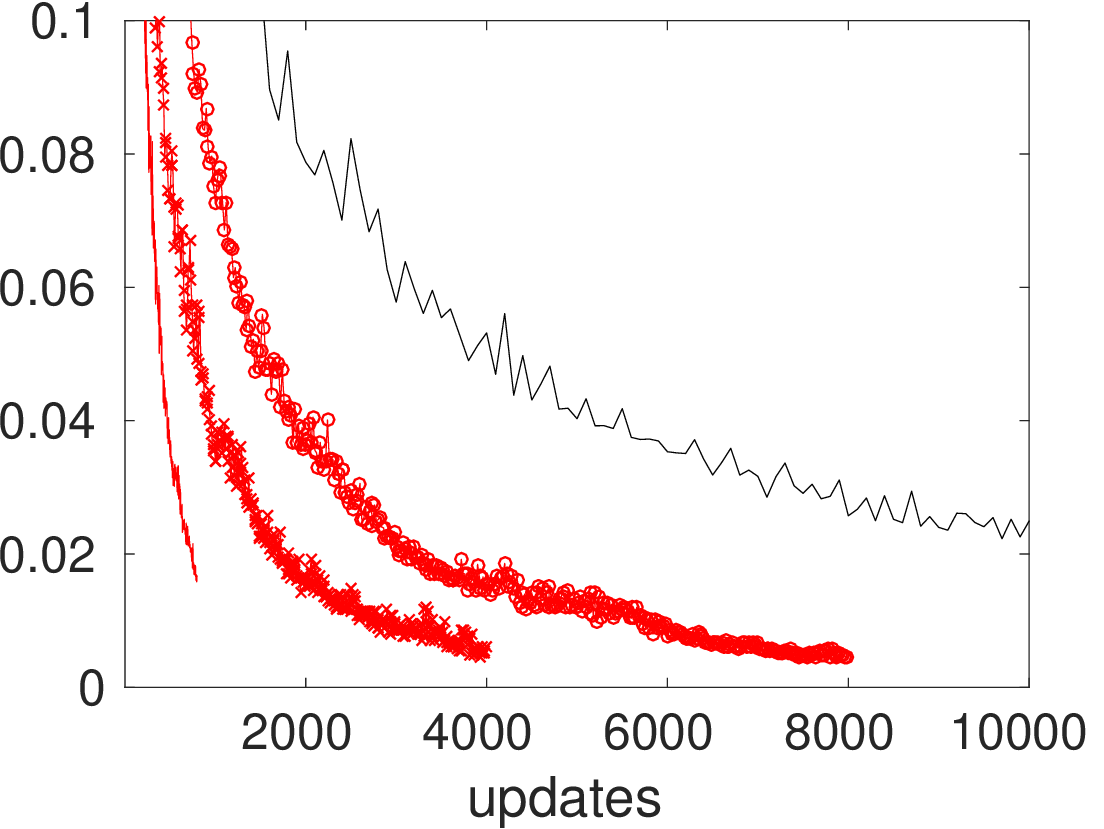} }
\end{tabular}
\vspace*{-2ex}
\caption{Performance of minibatch SGD with different minibatch sizes.}
\label{f:SGD}
\end{figure}

On the other hand, we provide test set objective vs. number of updates in Figure~\ref{f:SGD} (right plot). And we observe that the decrease of objective is much steeper for larger $b$, implying that a single gradient descent update with large $b$ is of higher quality. 

\paragraph{Performance of MP} We now show that MP can achieve
significantly higher accuracy with $b=10000$, even slight improving over that of minibatch SGD with $b=200$, using the same number of fresh samples and moderate number of gradient updates. 

In our MP implementation, we approximately solve the subproblems on each minibatch with $g$ gradient descent steps with momentum\footnote{Gradient descent with momentum and accelerated gradient descent have similar forms of updates.}, yielding a training procedure similar to that of minibatch SGD, except that each large minibatch is kept in memory for $g$ steps before switching to the next one, and that the gradient contains a \emph{retraction} term $\gamma (\w - \w_{t-1})$ from the quadratic regularization. The learning rate and momentum parameter are tuned over a smaller grid around the optimal values for minibatch SGD. 
We tune $g$ over $\cbr{5,\, 10,\, 20,\, 50}$ and the regularization
parameter $\gamma$ over the grid $\cbr{0,\, 10^{-6},\, 10^{-4},\,
  10^{-2},\, 1}$. 
This implementation reduces to minibatch SGD when $\gamma=0$ and $g=1$. 

For each $\gamma$, we select the combination of rest hyperparameters that gives the lowest test objective. Learning curves (objective vs. \# fresh samples, and objective vs. \# updates) for different values of $\gamma$ are given in Figure~\ref{f:MP}, where we also compare with the learning curves of minibatch SGD at $b=200$ and $b=10000$.
Observe that, with moderate values of $g$, MP can match the objective vs. \# fresh samples curve of minibatch SGD at $b=200$ so that it is sample-efficient (the trained model with $\gamma=10^{-2}$ and $g=50$ has a cross-entropy loss of $0.0004$ and a classification error rate of $0.012\%$). 
On the other hand, MP is close to minibatch SGD at $b=10000$ for the
objective vs. \# updates learning curve, so that each step is still of high quality and quickly decreases the objective. 

\begin{figure}[t]
\centering
\psfrag{obj}[b][]{test objective}
\psfrag{samples}[][]{\# fresh samples}
\psfrag{updates}[][]{\# updates}
\psfrag{SGD, b=200}[l][l][0.62]{SGD, $b=200$}
\psfrag{SGD, b=10000}[l][l][0.62]{SGD, $b=10^4$}
\psfrag{MP, ga=0, gd=50}[l][l][0.62]{MP, $b=10^4$, $\gamma=0$, $\hspace*{1.5em} g=50$}
\psfrag{MP, ga=1e-06, gd=50}[l][l][0.62]{MP, $b=10^4$, $\gamma=10^{-6}$, $g=50$}
\psfrag{MP, ga=0.0001, gd=50}[l][l][0.62]{MP, $b=10^4$, $\gamma=10^{-4}$, $g=50$}
\psfrag{MP, ga=0.01, gd=50}[l][l][0.62]{MP, $b=10^4$, $\gamma=10^{-2}$, $g=50$}
\psfrag{MP, ga=1, gd=5}[l][l][0.62]{MP, $b=10^4$, $\gamma=1$, $g=5$}
\begin{tabular}{@{}cc@{}}
\includegraphics[width=0.490\linewidth]{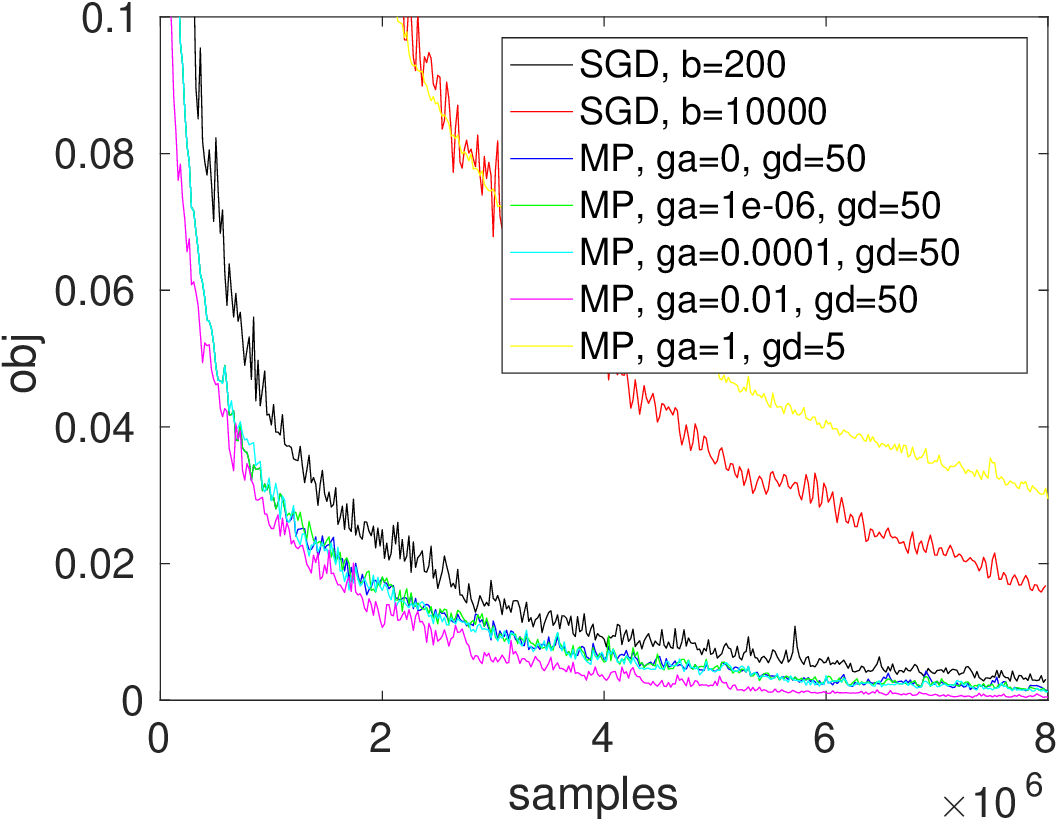} & 
\raisebox{0.01\linewidth}{
\includegraphics[width=0.500\linewidth]{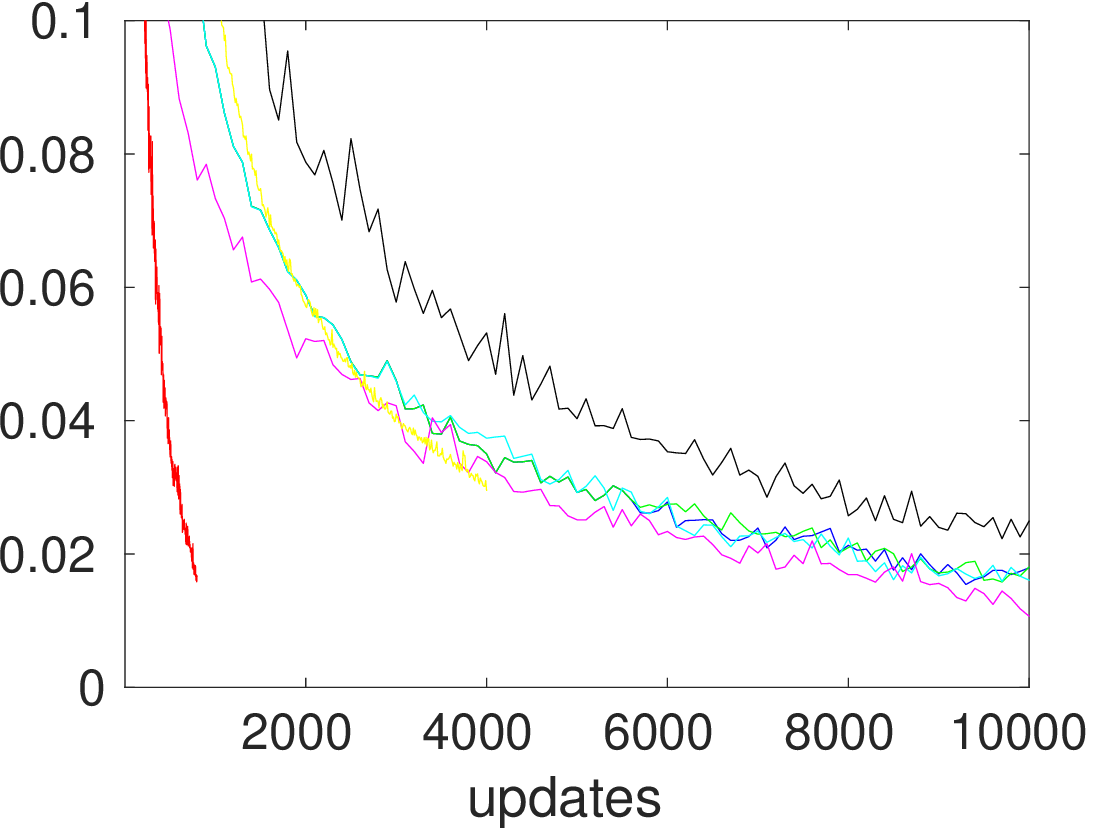} }
\end{tabular}
\vspace*{-2ex}
\caption{Performance of MP+AGD with different $\gamma$.}
\label{f:MP}
\end{figure}

We have seen in Figure~\ref{f:MP} that in fact $\gamma=0$ works quite well without the retraction term (difference in final objectives are not significant for small $\gamma$), implying that simply processing the same large minibatch multiple times in minibatch SGD helps improve the sample efficiency. For this simple method, we provide the learning curves at different $g$ values in Figure~\ref{f:NUMAGD}. 
From this figure, it is clear that we have speedup in terms of \# gradient
updates (or parallel runtime) at various levels of test objective (and
hence classification error rate). For example, to obtain a test objective
of $0.01$ (roughly corresponding to an acceptable error rate of $0.32\%$),
we can use MP with $b=10000$ and $g=5$ for about $2600$ updates, while
minibatch SGD with $b=200$ obtains the same objective after about $17000$
updates. 
This result demonstrates the success of a practical version of our
method: if we have a constant stream of data, we can perform several
gradient steps on each large minibatch in a parallel system to improve
runtime, without losing much statistical precision.

\begin{figure}[t]
\centering
\psfrag{obj}[b][]{test objective}
\psfrag{samples}[][]{\# fresh samples}
\psfrag{updates}[][]{\# updates}
\psfrag{SGD, b=200}[l][l][0.65]{SGD, $b=200$}
\psfrag{SGD, b=10000}[l][l][0.65]{SGD, $b=10^4$, $\gamma=0$, $g=1$}
\psfrag{MP, bb, gm=0, gd=5}[l][l][0.65]{MP, $b=10^4$, $\gamma=0$, $g=5$}
\psfrag{MP, bb, gm=0, gd=10}[l][l][0.65]{MP, $b=10^4$, $\gamma=0$, $g=10$}
\psfrag{MP, bb, gm=0, gd=20}[l][l][0.65]{MP, $b=10^4$, $\gamma=0$, $g=20$}
\psfrag{MP, bb, gm=0, gd=50}[l][l][0.65]{MP, $b=10^4$, $\gamma=0$, $g=50$}
\begin{tabular}{@{}cc@{}}
\includegraphics[width=0.490\linewidth]{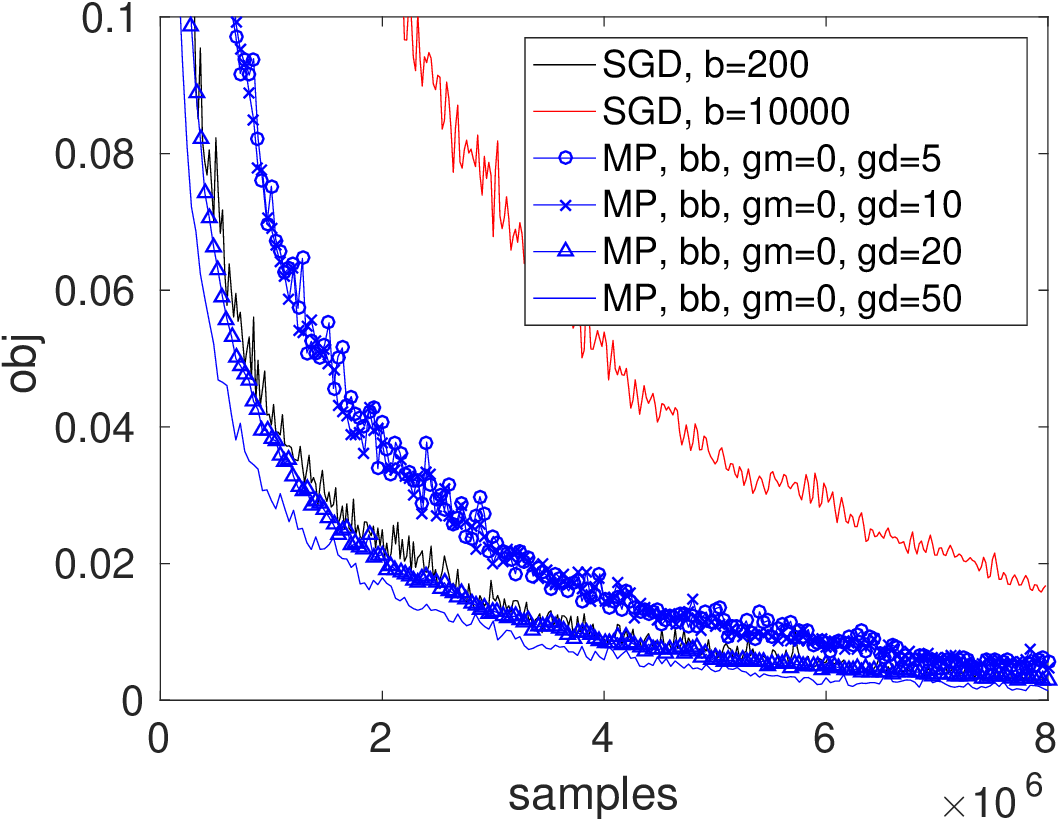} & 
\raisebox{0.01\linewidth}{
\includegraphics[width=0.500\linewidth]{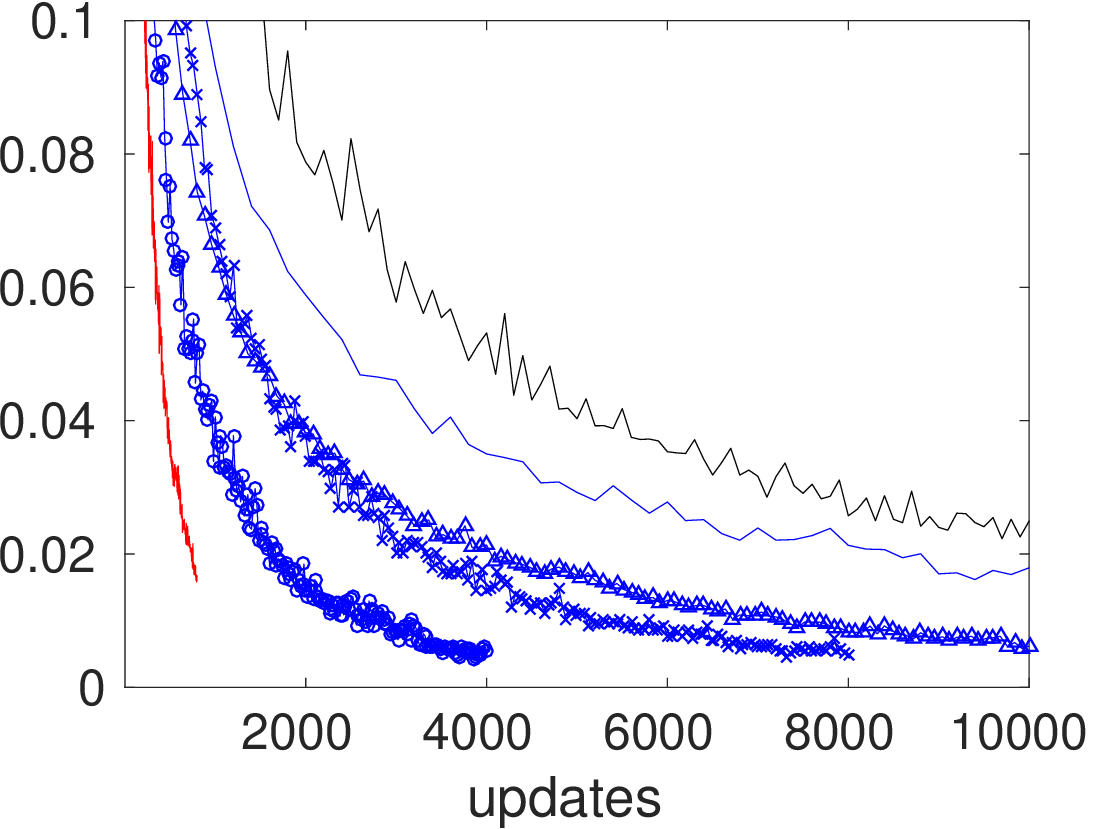} }
\end{tabular}
\vspace*{-2ex}
\caption{Performance of MP+AGD with $\gamma=0$ and different number of gradient updates per minibatch $g$.}
\label{f:NUMAGD}
\end{figure}

\paragraph{Choice of hyperparameters}
We comment on how to select in practice the hyperparameters---minibatch
size $b$, regularization parameter $\gamma$, and the number of gradient
steps on each minibach $g$. Intuitively,
one may set $b$ to be as large as possible to fully utilize the parallel
system, and set the number of steps $g$ to be relatively small to use more fresh
samples. Finally, $\gamma$ is a type of stepsize and is better tuned on a
validation set, as one would do for minibatch SGD. These are essentially
the principles we followed in the above experiments.

\section{A memory-efficient version of MP}
\label{sec:memory-efficient}

We have shown in previous sections that it suffices to approximately minimize the stochastic objective $F_t (\w)$ with $\calO \rbr{\frac{1}{\varepsilon^2}}$ samples, by approximately minimizing the empirical objective $\hF_t (\w)$. But the number of samples can be too large (as $\varepsilon \rightarrow 0$) that the memory requirement is high, since we need to store this many samples and process them multiple times. In this section, we provide a modified algorithm to resolve this issue, which achieves the same learning guarantee with the same level of total samples, and using any (sufficiently large) minibatch size.

The modified algorithm is based on the analysis of minibatch-prox by~\citet{Wang_17b} for convex objectives. The authors showed that for Lipschitz and strongly convex stochastic objective, the minibatch-prox algorithm achieves the optimal $\calO (1/n)$ rate\footnote{This rate is optimal in the sense of~\citet{NemirovYudin83a} and~\citet{Agarwal_12a}.} using $n$ total samples and any minibatch size. We can therefore apply their results to the problem of $\min_{\w} F_t (\w)$ at each iterations. 

In this section, we use $F (\x) = \phi (\x) + \frac{\gamma}{2} \norm{\x - \y}^2$ to denote the stochastic objective $F_t (\w)$ at any iteration, which is $(\beta+\gamma)$-smooth and $(\gamma - \sigma)$-strongly convex in $\x$. 
The lemma below is parallel to~\citet[Theorem~8]{Wang_17b}. Its proof is also similar to theirs, with the difference being the stability used: theirs used stability for Lipschitz losses, whereas ours use the stability for smooth loss given in Lemma~\ref{lem:stability}. 

\begin{lem} \label{lem:small-minibatch}
  Assume the same conditions of Lemma~\ref{lem:stability} on the instantaneous loss. 
  Consider the following iterative procedure: for $s=1,\dots,S$
  \begin{gather*}
    \x_s \approx \hx_s = \argmin_{\x}\; \hG_s (\x) \\
 \text{where} \; 
    \hG_s (\x) := \hF(\x) + \frac{\rho_s}{2} \norm{\x - \x_{s-1}}^2
    = \frac{1}{m} \sum_{i=1}^{m} \ell (\x,\xi_i^s) + \frac{\gamma}{2} \norm{\x - \y}^2 + \frac{\rho_s}{2} \norm{\x - \x_{s-1}}^2, 
  \end{gather*}
  where $\rho_s>0$, and $Z_s=\cbr{\xi_1^s, \dots, \xi_m^s}$ are $m$ i.i.d. samples drawn from the underlying distribution at iteration $s$.
  Let $\x_s$ be the output of a randomized algorithm $\calA$ satisfying $\bbE_{Z_s, \calA} \sbr{\hG_s (\x_s) - \hG_s (\hx_s)} \le \eta_s$. Then with the following choices of parameters:
  \begin{align*}
    m \ge \frac{2 (\sigma + \beta)}{\gamma - \sigma}, \qquad
    \rho_s = \frac{(\gamma - \sigma) (s-1)}{2}, \qquad
    \eta_s = \frac{V^2 S}{(\beta+\gamma) m} \cdot \frac{1}{s^5},
  \end{align*}
  we have for $\bar{\x}_S = \frac{2}{S (S+1)} \sum_{s=1}^S s \x_s$ that
  \begin{align*}
    \bbE \sbr{F (\bar{\x}_S) - F (\x^*)}  \le  \frac{200 V^2}{(\gamma - \sigma) m S}.
  \end{align*}
\end{lem}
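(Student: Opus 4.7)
The plan is to adapt the accelerated stochastic proximal-point analysis of Wang et al.\ (2017b, Theorem~8) from the Lipschitz convex setting to our smooth almost-convex setting by substituting their Lipschitz-based stability estimate with the smooth-loss stability of Lemma~\ref{lem:stability}. At iteration $s$ I would view the true objective as $G_s(\x) := F(\x) + \frac{\rho_s}{2}\norm{\x - \x_{s-1}}^2$, whose regularizer relative to $\phi$ is $\frac{\gamma}{2}\norm{\x - \y}^2 + \frac{\rho_s}{2}\norm{\x - \x_{s-1}}^2$, i.e.\ $(\gamma + \rho_s)$-strongly convex. The condition $(\gamma - \sigma)m \ge 2(\sigma + \beta)$ therefore implies the hypothesis of Lemma~\ref{lem:stability} for $G_s$; applying its second part to $\x_s$ (with suboptimality $\eta_s$) and writing $\x_s^* := \argmin_\x G_s(\x)$ yields
\begin{align*}
\bbE\sbr{ G_s(\x_s) - G_s(\x_s^*) } \;\le\; \frac{16\,V^2}{(\gamma - \sigma + \rho_s)\,m} + \frac{2(\beta + \gamma + \rho_s)\,\eta_s}{\gamma - \sigma + \rho_s} \;=:\; \Xi_s.
\end{align*}

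Second, I would build a one-step prox-point recursion. The $(\gamma - \sigma + \rho_s)$-strong convexity of $G_s$ evaluated at $\x_s$ (not the exact minimizer) gives $G_s(\x^*) \ge G_s(\x_s) + \dotp{\nabla G_s(\x_s)}{\x^* - \x_s} + \tfrac{\gamma - \sigma + \rho_s}{2}\norm{\x^* - \x_s}^2$. Substituting the definition of $G_s$ on both sides and using Young's inequality on the cross term, with $\bbE\norm{\nabla G_s(\x_s)}^2$ controlled by $\calO((\beta + \gamma + \rho_s)\,\Xi_s)$ via $(\beta+\gamma+\rho_s)$-smoothness of $G_s$ together with Step~1, produces
\begin{align*}
\bbE\sbr{F(\x_s) - F(\x^*)} + \tfrac{\gamma - \sigma + \rho_s}{4}\,\bbE\norm{\x_s - \x^*}^2 \;\le\; \tfrac{\rho_s}{2}\,\norm{\x_{s-1} - \x^*}^2 + c\,\Xi_s
\end{align*}
for an absolute constant $c$. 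The key algebraic fact is that $\rho_s = (\gamma - \sigma)(s-1)/2$ gives $s\,(\gamma - \sigma + \rho_s) = (s+1)\,\rho_{s+1}$, so multiplying the above inequality by $s$ makes the quadratic terms telescope across consecutive $s$.

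Summing the weighted inequalities over $s = 1,\ldots,S$, applying Jensen's inequality to $\bar{\x}_S = \frac{2}{S(S+1)} \sum_s s\,\x_s$, and dividing by $S(S+1)/2$ yields
\begin{align*}
\bbE\sbr{F(\bar{\x}_S) - F(\x^*)} \;\le\; \frac{2c}{S(S+1)} \sum_{s=1}^{S} s\,\Xi_s.
\end{align*}
The bound $\gamma - \sigma + \rho_s \ge \tfrac{\gamma - \sigma}{2}\max(1, s)$ makes $s \cdot \frac{V^2}{(\gamma - \sigma + \rho_s)\,m} \le \frac{2V^2}{(\gamma - \sigma)\,m}$ per term, contributing $\calO(V^2 S / ((\gamma - \sigma)m))$ overall; the schedule $\eta_s = \frac{V^2 S}{(\beta + \gamma)\,m\,s^5}$ is chosen precisely so that $s \cdot \frac{(\beta + \gamma + \rho_s)\eta_s}{\gamma - \sigma + \rho_s}$ sums to the same order, producing the final $\calO(V^2 / ((\gamma - \sigma)\,m\,S))$ rate.

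The main obstacle is Step~2: carefully folding the inexact-minimization and sampling errors into the prox-point inequality without losing factors of $s$. The aggressive schedule $\eta_s \propto 1/s^5$ is forced by the requirement that the optimization contribution to $s\,\Xi_s$, after scaling by $1/(\gamma - \sigma + \rho_s) \asymp 1/s$ from the Young absorption, still be summable to a constant uniformly in $S$. Matching the explicit constant $200$ comes from cascading the factor $16$ in Lemma~\ref{lem:stability}, the doubling from Young's inequality, and the Jensen factor $2/(S(S+1))$; this is bookkeeping once the recursion is set up.
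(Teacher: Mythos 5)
Your proposal takes a different route from the paper at the key step, and that route has a gap that breaks the telescoping. Concretely: you apply the $(\gamma-\sigma+\rho_s)$-strong convexity of $G_s$ at the \emph{inexact} iterate $\x_s$, which introduces the term $\dotp{\nabla G_s(\x_s)}{\x^* - \x_s}$, and then absorb it via Young's inequality. But any split of the form $\dotp{a}{b} \ge -\tfrac{1}{2c}\norm{a}^2 - \tfrac{c}{2}\norm{b}^2$ must sacrifice part of the strong-convexity coefficient: with the natural choice $c = \mu_s/2$ (where $\mu_s := \gamma-\sigma+\rho_s$) you retain only $\tfrac{\mu_s}{4}\norm{\x^*-\x_s}^2$ on the left. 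After multiplying by $s$ and using your (correct) identity $s\,\mu_s = (s+1)\rho_{s+1}$, the forward coefficient becomes $\tfrac{s\mu_s}{4} = \tfrac{(\gamma-\sigma)\,s(s+1)}{8}$, whereas the backward coefficient at step $s+1$ is $\tfrac{(s+1)\rho_{s+1}}{2} = \tfrac{(\gamma-\sigma)\,s(s+1)}{4}$. Since the forward is strictly half of the backward, the sum over $s$ accumulates an uncontrolled positive quantity $\sum_s \tfrac{(\gamma-\sigma)s(s+1)}{8}\bbE\norm{\x^*-\x_s}^2$ on the right-hand side, and no choice of weighting rescues this (the balancing weights decay geometrically, concentrating all the mass on $s=O(1)$).

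The paper sidesteps this precisely by \emph{not} applying strong convexity at $\x_s$. It uses the $(\gamma-\sigma+\rho_s)$-strong convexity of $\hG_s$ at the \emph{exact} empirical minimizer $\hx_s$, where $\nabla\hG_s(\hx_s)=\0$, so the full coefficient $\tfrac{\mu_s}{2}\norm{\x^*-\hx_s}^2$ survives and the telescoping works. The inexactness is then injected afterwards: $\norm{\hx_s-\x_s}^2 \le \tfrac{2\eta_s}{\mu_s}$, and the triangle inequality converts $\norm{\x^*-\hx_s}^2$ into $\norm{\x^*-\x_s}^2$ minus a cross term $2\norm{\x^*-\x_s}\norm{\hx_s-\x_s}$. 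Crucially, this cross term is \emph{not} absorbed by Young into the strong-convexity coefficient; it is carried along the sum, producing the self-referential bound $\sum_s \sqrt{\lambda s\eta_s}\sqrt{\bbE[s(s+1)\norm{\x^*-\x_s}^2]}$, which the paper then resolves with the recursion lemma of Schmidt et al.\ (Lemma~\ref{lem:resolve-recursion}). Your proposal does not anticipate this self-referential coupling or the need for a recursion-resolving tool, which is the second technical ingredient you would need even if the telescoping issue were repaired.
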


Let the total number of samples used in this procedure be $b=mS$. This lemma shows that the $\calO \rbr{\frac{1}{b}}$ convergence rate for $F(\x)$ (as in Lemma~\ref{lem:stability}) is still achievable, by iteratively drawing smaller minibatches and solving one simpler ERM on each.

This approach leads to an algorithm with intuitively two levels of minibatch-prox, one for the convexification of nonconvex objective, and the other for memory efficiency. We provide the sketch of the resulting algorithm in Algorithm~\ref{alg:mp2}.

\begin{algorithm}[t]
  \caption{Memory-efficient minibatch-prox for  stochastic nonconvex optimization $\min_{\w}\; \phi(\w)$.}
  \label{alg:mp2}
  \renewcommand{\algorithmicrequire}{\textbf{Input:}}
  \renewcommand{\algorithmicensure}{\textbf{Output:}}
  \begin{algorithmic}
    \STATE Initialize $\w_0$.
    \FOR{$t=1,2,\dots,K$}
    \STATE \COMMENT{Approximately solve $\min_{\w}\; \phi(\w) + \frac{\gamma}{2} \norm{\w - \w_{t-1}}^2$}
    \STATE Intialize inner loop $\x_0^{(t)}$
      \FOR{$s=1,2,\dots,S$}
    \STATE Draw $m$ fresh samples $Z_s^{(t)}=\cbr{\xi_1^{t,s},\dots,\xi_m^{t,s}}$
    \STATE Approximately compute 
    \begin{align*}
      \x_s^{(t)} \leftarrow \min_{\x}\; \frac{1}{m} \sum_{i=1}^m \ell(\x, \xi_i^{t,s}) + \frac{\gamma}{2} \norm{\x - \w_{t-1}}^2 + \frac{\rho_s}{2} \norm{\x - \x_{s-1}^{(t)}}^2
      \end{align*}
      by accelerated gradient descent or finite-sum methods
      \ENDFOR
      \STATE $\w_t \leftarrow \frac{2}{S (S+1)} \sum_{s=1}^S s \x_s^{(t)}$
      \ENDFOR
      \ENSURE Pick $R \in \cbr{1,\dots,K}$ uniformly at random and return $\w_R$.
    \end{algorithmic}
  \end{algorithm}

\section{Proof of Lemma~\ref{lem:stopping}}

  \begin{proof}
    Due to the $(\beta + \gamma)$-smoothness of $F_t (\w)$ and the optimality condition that $\nabla F_t (\w_t^*)=\0$, we have~\citep[Theorem~2.1.5]{Nester04a}
    \begin{align*}
      \bbE_{\calA} \norm{\nabla F_t (\w_t)}^2 
      \le 2 (\beta + \gamma) \cdot \bbE_{\calA} \sbr{ F_t (\w_t) - F_t (\w_t^*) }
      \le 2 (\beta + \gamma) \epsilon.
    \end{align*}
    Then, by the definition of $F_t (\w)$, it holds that 
    \begin{align*}
      \bbE_{\calA} \norm{ \nabla \phi (\w_t) }^2 
      & = \bbE_{\calA} \norm{ \nabla F_t (\w_t) - \gamma (\w_t - \w_{t-1}) }^2 \\
      & \le 2 \bbE_{\calA} \norm{ \nabla F_t (\w_t) }^2 + 2 \gamma^2 \bbE_{\calA} \norm{ \w_{t-1} - \w_t }^2  \\
      & \le 4 (\beta + \gamma) \epsilon + 2 \gamma^2 \bbE_{\calA} \norm{ \w_{t-1} - \w_t }^2 
    \end{align*}
    where we have used the fact that $(x + y)^2 \le 2 (x^2 + y^2)$ in the first inequality.
  \end{proof}

\section{Proof of Corollary~\ref{cor:reduction}}

  \begin{proof}
    By the definition of $F_t (\w)$, we have 
    \begin{align*}
      \bbE_{\calA} \sbr{ \phi (\w_{t-1}) - \phi (\w_t) } 
      & = \bbE_{\calA} \sbr{ F_t (\w_{t-1}) - F_t (\w_t) + \frac{\gamma}{2} \norm{\w_{t-1} - \w_t}^2 } \\
      & = F_t (\w_{t-1}) - F_t (\w_t^*) + \bbE_{\calA} \sbr{F_t (\w_t^*) -  F_t (\w_t) + \frac{\gamma}{2} \norm{\w_{t-1} - \w_t}^2 } \\
      & \ge \frac{\gamma}{2} \bbE_{\calA} \norm{\w_{t-1} - \w_t}^2 - \epsilon
    \end{align*}
    where the expectation is taken over randomness at iteration $t$. 

    Averaging this inequality over $t=1,\dots,K$ and taking expectation over randomness in all iterations, we have
    \begin{align*}
      \frac{1}{K} \sum_{t=1}^K \bbE_{\calA} \norm{\w_{t-1} - \w_t}^2 
      \le \frac{2 \bbE_{\calA} \sbr{ \phi (\w_0) - \phi (\w_K) }}{\gamma K}  +  \frac{2 \epsilon}{\gamma}
      \le \frac{ 2 (\phi (\w_0) - \phi^*) }{\gamma K}  +  \frac{2 \epsilon}{\gamma}.
    \end{align*}

    This implies that if we randomly pick $R \in \cbr{1,\dots,K}$, it holds that 
    \begin{align*}
      \bbE_{R,\calA} \norm{\w_{R-1} - \w_R}^2 
      \le  \frac{ 2 (\phi (\w_0) - \phi^*) }{\gamma K}  +  \frac{2 \epsilon}{\gamma}.
    \end{align*}
    Plugging this into Lemma~\ref{lem:stopping} yields the desired result.
  \end{proof}

\section{Proof of Lemma~\ref{lem:stability}}

\begin{proof}
  The first part of this proof is adapted from that of~\citet{ShalevBen-David14a}[Section~13.3.2] for smooth and nonnegative losses. Note that our bound does not assume the nonnegativity of the instantaneous loss. 
  \paragraph{Exact ERM} Denote by $Z^{(i)}$ the sample set that is identical to $Z$ except that the $i$-th sample $\xi_i$ is replaced by another random sample $\xi_i^\prime$, by $\hF^{(i)} (\w)$ the empirical objective defined using $Z^{(i)}$, \ie,
  \begin{align*}
    \hF^{(i)} (\w) := \frac{1}{b} \bigg( \sum_{j\neq i} \ell (\w, \xi_i) +  \ell (\w, \xi_i^\prime) \bigg) + r (\w),
  \end{align*}
  and by $\hw^{(i)}=\argmin_{\w}\ \hF^{(i)} (\w)$ the empirical risk minimizer of $\hF^{(i)} (\w)$.

  First, observe that 
  \begin{align} 
    \hF (\hw^{(i)}) - \hF (\hw) 
    & = \frac{\ell (\hw^{(i)}, \xi_i) - \ell (\hw, \xi_i) }{b} 
    + \frac{\sum_{j\neq i} \ell (\hw^{(i)}, \xi_j) - \ell (\hw, \xi_j) }{b} 
    + r (\hw^{(i)}) - r (\hw) \nonumber \\
    & = \frac{\ell (\hw^{(i)}, \xi_i) - \ell (\hw, \xi_i) }{b} 
    + \frac{ \ell (\hw, \xi_i^\prime) - \ell (\hw^{(i)}, \xi_i^\prime) }{b} 
    + \left( \hF^{(i)} (\hw^{(i)}) - \hF^{(i)} (\hw) \right) \nonumber  \\ 
    \nonumber 
    & \le \frac{ \ell (\hw^{(i)}, \xi_i) - \ell (\hw, \xi_i) }{b} 
    + \frac{ \ell (\hw, \xi_i^\prime) - \ell (\hw^{(i)}, \xi_i^\prime) }{b}
    - \frac{\gamma - \sigma}{2} \norm{ \hw^{(i)} - \hw }^2
  \end{align}
  where we have used in the inequality the fact that $\hw^{(i)}$ is the minimizer of $\hF^{(i)} (\w)$, which is $(\gamma - \sigma)$-strongly convex.

  On the other hand, it follows from the $(\gamma - \sigma)$-strong convexity of $\hF (\w)$ that 
  \begin{align*} 
    \hF (\hw^{(i)}) - \hF (\hw) \ge \frac{\gamma - \sigma}{2} \norm{ \hw^{(i)} - \hw }^2 .
  \end{align*}

  Combining the above two inequalities, and applying the $\sigma$-almost convexity and $\beta$-smoothness of $\ell(\w,\xi)$, we obtain 
  \begin{align} 
    \; &  (\gamma - \sigma) b \cdot \norm{ \hw^{(i)} - \hw }^2 \nonumber \\ \label{e:stability-1}
    \le & \left( \ell (\hw^{(i)}, \xi_i) - \ell (\hw, \xi_i) \right) + \left( \ell (\hw, \xi_i^\prime) - \ell (\hw^{(i)}, \xi_i^\prime) \right)  \\
    \le & \dotp{\nabla \ell(\hw^{(i)},\xi_i)}{\hw^{(i)} - \hw} + \frac{\sigma}{2} \norm{\hw^{(i)} - \hw}^2 +  \dotp{\nabla \ell(\hw,\xi_i^\prime)}{\hw - \hw^{(i)}} + \frac{\sigma}{2} \norm{\hw - \hw^{(i)}}^2  \nonumber \\
    = & \dotp{\nabla \ell(\hw^{(i)},\xi_i) - \nabla \ell(\hw,\xi_i^\prime)}{\hw^{(i)} - \hw} + \sigma \norm{\hw^{(i)} - \hw}^2  \nonumber \\ 
    = & \dotp{\nabla \ell(\hw^{(i)},\xi_i) - \nabla \phi (\hw^{(i)})}{\hw^{(i)} - \hw} + 
    \dotp{\nabla \phi (\hw^{(i)}) - \nabla \phi (\hw)}{\hw^{(i)} - \hw} \nonumber \\
    & \quad
    + \dotp{\nabla \phi (\hw) - \nabla \ell(\hw,\xi_i^\prime)}{\hw^{(i)} - \hw}
    + \sigma \norm{\hw^{(i)} - \hw}^2  \nonumber \\ 
    \label{e:stability-2}
    \le & \rbr{\norm{\nabla \ell(\hw^{(i)},\xi_i) - \nabla \phi(\hw^{(i)})} + \norm{\nabla \ell(\hw,\xi_i^\prime) - \nabla \phi(\hw)}} \cdot \norm{\hw^{(i)} - \hw} + (\sigma + \beta) \norm{\hw^{(i)} - \hw}^2.
  \end{align}
  
  By the assumption that $\sigma + \beta \le  \frac{(\gamma - \sigma) b}{2}$, we then have 
  \begin{align*}
    \norm{\hw^{(i)} - \hw} & \le \frac{1}{(\gamma - \sigma) b - (\sigma + \beta)} \rbr{\norm{\nabla \ell(\hw^{(i)},\xi_i) - \nabla \phi(\hw^{(i)})} + \norm{\nabla \ell(\hw,\xi_i^\prime) - \nabla \phi(\hw)}} \\
    & \le \frac{2}{(\gamma - \sigma) b} \rbr{\norm{\nabla \ell(\hw^{(i)},\xi_i) - \nabla \phi(\hw^{(i)})} + \norm{\nabla \ell(\hw,\xi_i^\prime) - \nabla \phi(\hw)}}. 
  \end{align*}

  Taking expectations of~\eqref{e:stability-1} and~\eqref{e:stability-2} over the samples and plugging in the above inequality yields
  \begin{align*} 
    & \quad  2 \bbE_{Z \cup \cbr{\xi_i^\prime}} \left[ \phi (\hw) - \hphi (\hw) \right] \\
    & \le \rbr{ \frac{2}{(\gamma - \sigma) b} + \frac{4 (\sigma + \beta)}{(\gamma - \sigma)^2 b^2} }\cdot  \bbE_{Z \cup \cbr{\xi_i^\prime}} \rbr{\norm{\nabla \ell(\hw^{(i)},\xi_i) - \nabla \phi(\hw^{(i)})} + \norm{\nabla \ell(\hw,\xi_i^\prime) - \nabla \phi(\hw)}}^2 \\
    & \le \frac{4}{(\gamma - \sigma) b} \bbE_{Z \cup \cbr{\xi_i^\prime}} \rbr{\norm{\nabla \ell(\hw^{(i)},\xi_i) - \nabla \phi(\hw^{(i)})} + \norm{\nabla \ell(\hw,\xi_i^\prime) - \nabla \phi(\hw)}}^2 \\
    & \le \frac{8}{(\gamma - \sigma) b} \rbr{ \bbE_{Z^{(i)}} \sbr{ \bbE_{\cbr{\xi_i}} \norm{\nabla \ell(\hw^{(i)},\xi_i) - \nabla \phi(\hw^{(i)})}^2} + \bbE_{Z} \sbr{ \bbE_{\cbr{\xi_i^\prime}} \norm{\nabla \ell(\hw,\xi_i^\prime) - \nabla \phi(\hw)}^2} } \\
    & \le \frac{16 V^2}{(\gamma - \sigma) b}
  \end{align*}
  where we have used the triangle inequality in the first step, the assumption $2 (\sigma + \beta) \le (\gamma - \sigma) b$ in the second step, the fact that $(x + y)^2 \le 2 (x^2 + y^2)$ in the third step, and the assumption on the variance of stochastic gradient in the final step.

  Since $\hw$ minimizes $\hF (\w)$, we have $\hF (\hw) \le \hF (\w^*)$ and consequently 
  \begin{align*}
    \bbE_{Z} \sbr{F (\hw)}
    = \bbE_{Z} \sbr{\hF (\hw) + \phi(\hw) - \hphi(\hw)}
    \le F (\w^*) + \bbE_{Z} \sbr{\phi(\hw) - \hphi(\hw)} 
    \le F (\w^*) + \frac{8 V^2}{(\gamma - \sigma) b}
  \end{align*}
  Then the first part of the lemma follows. 
  \paragraph{Inexact ERM} For the approximate solution $\tw$, due to the $(\gamma - \sigma)$-strong convexity of $\hF (\w)$, we have
  \begin{align*}
    \bbE_{Z,\calA} \norm{\tw - \hw}^2 \le \frac{2}{\gamma - \sigma} \bbE_{Z,\calA} \left[ \hF(\tw) - \hF(\hw) \right] \le \frac{2 \delta}{\gamma - \sigma}.
  \end{align*}
  Now, in view of the $(\beta+\gamma)$-smoothness of $F(\w)$, it holds that 
  \begin{align*}
    \bbE_{Z,\calA} \sbr{F (\tw) - F (\hw)}
    & \le \bbE_{Z,\calA} \dotp{\nabla F (\hw)}{\tw - \hw} + \frac{\beta+\gamma}{2} \bbE_{Z,\calA} \norm{\tw - \hw}^2  \\
    & \le \bbE_{Z,\calA} \sbr{\norm{\nabla F (\hw)} \cdot \norm{\tw - \hw}} + \frac{\beta+\gamma}{2} \bbE_{Z,\calA} \norm{\tw - \hw}^2 \\
    & \le \bbE_{Z,\calA} \sbr{ \sqrt{2 (\beta + \gamma) \cdot (F(\hw) - F (\w^*))} \norm{\tw - \hw} } + \frac{\beta+\gamma}{2} \bbE_{Z,\calA} \norm{\tw - \hw}^2 \\
    & \le \bbE_{Z} \sbr{F(\hw) - F (\w^*)} + (\beta+\gamma) \cdot \bbE_{Z,\calA} \norm{\tw - \hw}^2 \\
    & \le  \frac{8 V^2}{(\gamma - \sigma) b} +  \frac{2 (\beta+\gamma) \delta}{\gamma - \sigma}
  \end{align*}
  where we have used~\citet[Theorem~2.1.5]{Nester04a} in the third inequality, and the fact that $xy \le x^2 + \frac{y^2}{4}$ in the fourth inequality. Then the lemma follows by combining this inequality with the stability of exact ERM. 
\end{proof}

\section{Proof of Theorem~\ref{thm:convergence}}

\begin{proof}
Let $\delta=\frac{8V^2}{(\beta+\gamma)b}$, then by the second part of Lemma~\ref{lem:stability}, it holds that 
\begin{align*}
\bbE_{Z_t, \calA} \sbr{F_t (\w_t) - F_t (\w_t^*)} \le \frac{32 V^2}{(\gamma - \sigma) b}. 
\end{align*}
Combining this with Corollary~\ref{cor:reduction} yields
\begin{gather*}
  \bbE_{R,\calA} \norm{\nabla \phi(\w_R)}^2  
   \le \frac{ 4 \gamma \rbr{\phi (\w_0) - \phi^*}}{K}  + (4 \beta + 8 \gamma) \cdot \frac{32 V^2}{(\gamma - \sigma) b} \\
  = \frac{ 4 \sigma \rbr{\phi (\w_0) - \phi^*}}{K} +  \frac{256 V^2}{b} +  \frac{ 4 (\gamma - \sigma) \rbr{\phi (\w_0) - \phi^*}}{K}   
  + (4 \beta + 8 \sigma) \cdot \frac{32 V^2}{(\gamma - \sigma) b}.
\end{gather*}
Minimizing the right hand side over $\gamma$ yields the optimal choice $\gamma = \sigma + \sqrt{\frac{32 (\beta + 2 \sigma) V^2 K}{\rbr{\phi (\w_0) - \phi^*} b}}$ and the desired result.
\end{proof}

\section{Proof of Lemma~\ref{lem:small-minibatch}}

\begin{proof} Denote by $\x^*=\argmin_{\x}\; F(\x)$ the unique minimizer of $F(\x)$, 
  and by $G_s (\x) = \phi(\x) + \frac{\gamma}{2} \norm{\x - \y}^2 + \frac{\rho_s}{2} \norm{\x - \x_{s-1}}^2$ the population counterpart of $\hG_s (\x)$, with unique minimizer $\x_s^*=\argmin_{\x} G_s (\x)$. In the following, we also use the shorthand $L = \beta + \gamma$, and $\lambda = \gamma - \sigma$. 

  First, by the $(\lambda+\rho_s)$-strong convexity of $\hG_s(\x)$, we have
  \begin{align} \label{e:mp-deterministic}
    \hG_s (\x^*) \ge \hG_s (\hx_s) + \frac{\lambda+\rho_s}{2} \norm{\x^* - \hx_s}^2. 
  \end{align}
  By the first part of Lemma~\ref{lem:stability} (we are now applying the lemma to $\hG_s (\x)$ and $G_s (\x)$, whose data-independent regularizer $\frac{\gamma}{2} \norm{\x - \y}^2 + \frac{\rho_s}{2} \norm{\x - \x_{s-1}}^2$ is $(\gamma+\rho_s)$-strongly convex), we have that $\bbE_{Z_s} \sbr{\phi (\x_s^*) - \frac{1}{m} \sum_{i=1}^m \ell(\x_s^*,\xi_i^s)} \le \frac{8 V^2}{(\lambda+\rho_s) m}$ as long as $m \ge \frac{2 (\sigma+\beta)}{\lambda +\rho_s}$. 
  Therefore, taking expectation of~\eqref{e:mp-deterministic} over $Z_s$ yields
  \begin{gather} 
    F (\x^*) + \frac{\rho_s}{2} \norm{\x^* - \x_{s-1}}^2 
    \ge \bbE_{Z_s} \sbr{ F (\hx_s) +  \frac{\rho_s}{2} \norm{\hx_s - \x_{s-1}}^2 - \frac{8 V^2}{(\lambda+\rho_s) m} + \frac{\lambda + \rho_s}{2} \norm{\x^* - \hx_s}^2} \nonumber \\ \label{e:mp-exact-generalization}
    \ge \bbE_{Z_s} \sbr{ F (\x_s) +  \frac{\rho_s}{2} \norm{\x_s - \x_{s-1}}^2 - \frac{16 V^2}{(\lambda+\rho_s) m} - \frac{2 (L + \rho_s) \eta_s}{\lambda + \rho_s}  + \frac{\lambda + \rho_s}{2} \norm{\x^* - \hx_s}^2}
  \end{gather}
  where we have used the second part of Lemma~\ref{lem:stability} in the second inequality.

  Next, we relate $\hx_s$ to $\x_s$ for the last term of~\eqref{e:mp-exact-generalization}. By the $(\lambda+\rho_s)$-strong convexity of $\hG_s(\x)$, we have 
  $\bbE_{Z_s,\calA} \norm{\hx_s - \x_s}^2 \le \frac{2 \eta_s}{\lambda+\rho_s}$, 
  and then by the triangle inequality
  \begin{align}
    \bbE_{Z_s,\calA} \norm{\x^* - \hx_s}^2 & \ge \bbE_{Z_s,\calA} \abs{ \norm{\x^* - \x_s} - \norm{\hx_s - \x_s} }^2  \nonumber \\
    & \ge \bbE_{Z_s,\calA} \norm{\x^* - \x_s}^2 - 2 \bbE_{Z_s,\calA} \sbr{ \norm{\x^* - \x_s} \cdot \norm{\hx_s - \x_s} } \nonumber \\
    & \ge \bbE_{Z_s,\calA} \norm{\x^* - \x_s}^2 - 2 \sqrt{ \bbE_{Z_s,\calA} \norm{\x^* - \x_s}^2} \sqrt{\bbE_{Z_s,\calA}\norm{\hx_s - \x_s}^2  } \nonumber  \\ \label{e:mp-inexact-1}
    & \ge \bbE_{Z_s,\calA} \norm{\x^* - \x_s}^2 - 2 \sqrt{ \bbE_{Z_s,\calA} \norm{\x^* - \x_s}^2} \sqrt{ \frac{2 \eta_s}{\lambda+\rho_s} }
  \end{align}
  where the third inequality is due to the Cauchy-Schwarz inequality.

  Substituting~\eqref{e:mp-inexact-1} into~\eqref{e:mp-exact-generalization} and rearranging terms, we obtain
  \begin{align*}
    \bbE_{Z_s,\calA} \sbr{ F (\x_s) - F (\x^*) } & \le \bbE_{Z_s,\calA} \sbr{ \frac{\rho_s}{2} \norm{\x^* - \x_{s-1}}^2 - \frac{\lambda + \rho_s}{2} \norm{\x^* - \x_s}^2} \\
    & \quad + \frac{16 V^2}{(\lambda+\rho_s) m} + \frac{2 (L + \rho_s) \eta_s}{\lambda + \rho_s} + \sqrt{ 2 (\lambda+\rho_s) \eta_s } \sqrt{ \bbE_{Z_s,\calA} \norm{\x^* - \x_s}^2}. 
  \end{align*}
  Setting $\rho_s=\frac{\lambda (s-1)}{2}$, and multiplying both sides by $s$, we further obtain
  \begin{align*}
    s \bbE_{Z_s,\calA} \sbr{ F (\x_s) - F (\x^*) } & \le \bbE_{Z_s,\calA} \sbr{ \frac{\lambda (s-1) s}{4} \norm{\x^* - \x_{s-1}}^2 - \frac{\lambda s (s+1)}{4} \norm{\x^* - \x_s}^2} \\
    & \quad + \frac{32 V^2}{\lambda m} + \frac{(4L + \lambda s) \eta_s}{\lambda} + \sqrt{  \lambda s \eta_s } \sqrt{ \bbE_{Z_s,\calA} \sbr{ s (s+1) \norm{\x^* - \x_s}^2} }. 
  \end{align*}

  Summing the above inequality over $s=1,\dots,S$ yields
  \begin{gather}
    \bbE \sbr{ \sum_{s=1}^S s \rbr{F (\x_s) - F (\x^*)} } + \bbE \sbr{\frac{S(S+1) \norm{\x^* - \x_S}^2}{4}} \nonumber \\ \label{e:mp-inexact-2}
    \le \frac{32 V^2 S}{\lambda m} + \frac{4L}{\lambda} \sum_{s=1}^S \eta_s + \sum_{s=1}^S s \eta_s 
    + \sum_{s=1}^S \sqrt{ \lambda s \eta_s } \sqrt{ \bbE \sbr{ s (s+1) \norm{\x^* - \x_s}^2} } .
  \end{gather}

  \paragraph{Bounding $\norm{\x^* - \x_S}^2$} Dropping the $\bbE \sbr{ \sum_{s=1}^S s \rbr{F (\x_s) - F (\x^*)} }$ term from~\eqref{e:mp-inexact-2} which is nonnegative, we have 
  \begin{gather*}
    \bbE \sbr{S(S+1) \norm{\x^* - \x_S}^2} \\ 
    \le \frac{128 V^2 S}{\lambda m} + \frac{16 L}{\lambda} \sum_{s=1}^S \eta_s + 4 \sum_{s=1}^S s \eta_s 
    + 4 \sum_{s=1}^S \sqrt{ \lambda s \eta_s } \sqrt{ \bbE \sbr{ s (s+1) \norm{\x^* - \x_s}^2} } .
  \end{gather*}
  Now apply Lemma~\ref{lem:resolve-recursion} and we obtain
  \begin{align*}
    \bbE \sbr{S(S+1) \norm{\x^* - \x_S}^2} \le 
    \sqrt{\frac{128 V^2 S}{\lambda m}} + \sqrt{\frac{16 L}{\lambda} \sum_{s=1}^S \eta_s} + \sqrt{4 \sum_{s=1}^S s \eta_s } + 4 \sum_{s=1}^S \sqrt{ \lambda s \eta_s }.
  \end{align*}
  Note that this bound increases with $S$.

  \paragraph{Bounding the function value} Dropping the $\bbE \sbr{S(S+1) \norm{\x^* - \x_S}^2}$ term from~\eqref{e:mp-inexact-2} which is nonnegative, we have 
  \begin{align}
    & \bbE \sbr{ \sum_{s=1}^S s \rbr{F (\x_s) - F (\x^*)} } \nonumber \\
    \le & \frac{32 V^2 S}{\lambda m} + \frac{4L}{\lambda} \sum_{s=1}^S \eta_s + \sum_{s=1}^S s \eta_s  +  \sum_{s=1}^S \sqrt{ \lambda s \eta_s } \sqrt{ \bbE \sbr{ S (S+1) \norm{\x^* - \x_S}^2}}   \nonumber \\
    \le & \frac{32 V^2 S}{\lambda m} + \frac{4L}{\lambda} \sum_{s=1}^S \eta_s + \sum_{s=1}^S s \eta_s \nonumber \\ \label{e:mp-inexact-3}
    & \qquad + \rbr{ \sum_{s=1}^S \sqrt{ \lambda s \eta_s } } \rbr{\sqrt{\frac{128 V^2 S}{\lambda m}} + \sqrt{\frac{16 L}{\lambda} \sum_{s=1}^S \eta_s} + \sqrt{4 \sum_{s=1}^S s \eta_s } + 4 \sum_{s=1}^S \sqrt{ \lambda s \eta_s }  }. 
  \end{align}

  We require that $\eta_s$ decays with $s$, and in particular 
  \begin{align} \label{e:inexact-strongly-convex-eta_t}
    \eta_s = \frac{V^2 S}{L m} \cdot \frac{1}{s^5}.
  \end{align}
  Recall that $\sum_{s=1}^{\infty} \frac{1}{s^{1+\delta}} \le \frac{1+\delta}{\delta}$. Then~\eqref{e:inexact-strongly-convex-eta_t} ensures
  \begin{align*}
    \sum_{s=1}^S \eta_s \le 2 \frac{V^2 S}{L m},\qquad
    \sum_{s=1}^S \sqrt{\lambda s \eta_s} \le 2 \sqrt{\frac{\lambda V^2 S}{L m}}, \qquad \text{and} \quad
    \sqrt{ \sum_{s=1}^S s \eta_s} \le 2 \sqrt{\frac{V^2 S}{L m}}.
  \end{align*}
  Plugging them into~\eqref{e:mp-inexact-3}, and noting that $\lambda \le L$, we obtain
  \begin{align*}
    \bbE \sbr{ \sum_{s=1}^S s \rbr{F (\x_s) - F (\x^*)} }
    \le  \frac{100 V^2 S}{\lambda m}.
  \end{align*}

  By returning the weighted average $\bar{\x}_S = \frac{2}{S (S+1)} \sum_{s=1}^S s \x_s$ and the convexity of $F(\x)$, we obtain the desired result.
\end{proof}

\section{An auxiliary lemma}

\begin{lem} 
  (\citealp[Lemma~1]{Schmid_11a}) \label{lem:resolve-recursion}
  Assume that the non-negative sequence $\cbr{u_S}$ satisfies the following recursion for all $S \ge 1$:
  \begin{align*}
    u_S^2 \le A_S + \sum_{s=1}^S \lambda_s u_s,
  \end{align*}
  with $A_S$ an increasing sequence, $A_0 \ge u_0^2$ and $\lambda_s \ge 0$ for all $s$. Then, for all $S \ge 1$, we have 
  \begin{align*}
    u_S \le \frac{1}{2} \sum_{s=1}^S \lambda_s + \left( A_S + \left(\frac{1}{2} \sum_{s=1}^S \lambda_s \right)^2 \right)^{\frac{1}{2}} \le \sqrt{A_S} + \sum_{s=1}^S \lambda_s.
  \end{align*}
\end{lem}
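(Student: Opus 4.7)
The plan is to reduce the recursive inequality to a scalar quadratic inequality by replacing every $u_s$ appearing on the right-hand side with a single uniform upper bound, namely the running maximum $U_S := \max_{0 \le s \le S} u_s$. Once the recursion is collapsed in this way, the entire statement becomes a one-line application of the quadratic formula.

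The key observation is that for every $s$ with $1 \le s \le S$, the monotonicity of the sequence $A_\cdot$ together with the non-negativity of each $\lambda_r u_r$ lets me enlarge the right-hand side of the hypothesis:
\begin{equation*}
u_s^2 \;\le\; A_s + \sum_{r=1}^{s} \lambda_r u_r \;\le\; A_S + U_S \sum_{r=1}^{S} \lambda_r,
\end{equation*}
while for $s = 0$ the hypothesis $A_0 \ge u_0^2$ combined with $A_0 \le A_S$ gives the same bound. Taking the maximum over $0 \le s \le S$ on the left yields the scalar inequality
\begin{equation*}
U_S^2 \;\le\; A_S + \Lambda_S\, U_S, \qquad \text{where } \Lambda_S := \sum_{r=1}^{S} \lambda_r.
\end{equation*}

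Viewing this as the quadratic inequality $U_S^2 - \Lambda_S U_S - A_S \le 0$ in the unknown $U_S \ge 0$, the roots of the corresponding equality are $\tfrac{\Lambda_S}{2} \pm \sqrt{A_S + \Lambda_S^2/4}$, and the non-negative branch gives $U_S \le \tfrac{\Lambda_S}{2} + \sqrt{A_S + \Lambda_S^2/4}$. Since $u_S \le U_S$, the first claimed bound follows immediately. The second, looser bound is then obtained from the elementary sub-additivity $\sqrt{a+b} \le \sqrt{a} + \sqrt{b}$ applied with $a = A_S$ and $b = \Lambda_S^2/4$, which produces $u_S \le \tfrac{\Lambda_S}{2} + \sqrt{A_S} + \tfrac{\Lambda_S}{2} = \sqrt{A_S} + \Lambda_S$.

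The step I expect to require the most care is justifying the uniform replacement $\sum_{r=1}^s \lambda_r u_r \le \Lambda_S\, U_S$ for every $s$, and in particular handling the index $s = 0$ for which the recursion itself does not apply. This is precisely where the assumption $A_0 \ge u_0^2$ enters: without it one could not rule out the possibility that the running maximum $U_S$ is attained at $s = 0$ by a value that is not controlled by any instance of the hypothesis. Everything after the collapse to $U_S^2 \le A_S + \Lambda_S U_S$ is routine algebra.
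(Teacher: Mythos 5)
The paper does not prove this lemma itself; it is stated with a citation to Schmidt et al.\ (2011), whose proof uses exactly the device you propose. Your argument is correct: bounding each $u_s^2$ for $0\le s\le S$ by $A_S+\Lambda_S U_S$ (with the hypothesis $A_0\ge u_0^2$ covering the otherwise uncontrolled index $s=0$), passing to the running maximum to obtain the scalar quadratic $U_S^2\le A_S+\Lambda_S U_S$, and then reading off the non-negative root gives the first bound, with $\sqrt{a+b}\le\sqrt{a}+\sqrt{b}$ yielding the second. One small point worth noting, since you invoke it implicitly: the step $\max_{0\le s\le S} u_s^2 = U_S^2$ uses the non-negativity of the $u_s$, which the hypothesis supplies.
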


\bibliography{macp,macp-xref}

\begin{thebibliography}{38}
\providecommand{\natexlab}[1]{#1}
\providecommand{\url}[1]{\texttt{#1}}
\expandafter\ifx\csname urlstyle\endcsname\relax
  \providecommand{\doi}[1]{doi: #1}\else
  \providecommand{\doi}{doi: \begingroup \urlstyle{rm}\Url}\fi

\bibitem[Agarwal et~al.(2012)Agarwal, Bartlett, Ravikumar, and
  Wainwright]{Agarwal_12a}
Alekh Agarwal, Peter~L. Bartlett, Pradeep Ravikumar, and Martin~J. Wainwright.
\newblock Information-theoretic lower bounds on the oracle complexity of
  stochastic convex optimization.
\newblock \emph{IEEE Trans. Information Theory}, 58\penalty0 (5):\penalty0
  3235--3249, 2012.

\bibitem[Allen-Zhu(2017)]{Allen-Zhu17a}
Zeyuan Allen-Zhu.
\newblock Natasha: {Faster} non-convex stochastic optimization via strongly
  non-convex parameter.
\newblock In Doina Precup and Yee~Whye Teh, editors, \emph{Proc. of the 34rd
  Int. Conf. Machine Learning (ICML 2017)}, Sydney, Australia, August~6--11
  2017.

\bibitem[Bertsekas(1979)]{Bertsek79a}
D.~P. Bertsekas.
\newblock Convexification procedures and decomposition methods for nonconvex
  optimization problems.
\newblock \emph{J. Optimization Theory and Applications}, 29\penalty0
  (2):\penalty0 169--197, 1979.
\newblock URL \url{http://web.mit.edu/dimitrib/www/Convexification_Mult.pdf}.

\bibitem[Bertsekas(1999)]{Bertsek99a}
Dimitri~P. Bertsekas.
\newblock \emph{Nonlinear Programming}.
\newblock Athena Scientific, Nashua, NH, second edition, 1999.

\bibitem[Bertsekas(2015)]{Bertsek15a}
Dimitri~P. Bertsekas.
\newblock Incremental aggregated proximal and augmented {Lagrangian}
  algorithms.
\newblock arXiv:1509.09257 [cs.SY], November~4 2015.

\bibitem[Bottou(1991)]{Bottou91a}
L.~Bottou.
\newblock Stochastic gradient learning in neural networks.
\newblock In \emph{Proc. Neuron\^{\i}mes}, 1991.

\bibitem[Bousquet and Elisseeff(2002)]{BousquetElisseef02a}
Olivier Bousquet and Andr{\'e} Elisseeff.
\newblock Stability and generalization.
\newblock \emph{Journal of Machine Learning Research}, 2:\penalty0 499--526,
  March 2002.

\bibitem[Carmon et~al.(2017)Carmon, Duchi, Hinder, and Sidford]{Carmon_17a}
Yair Carmon, John~C. Duchi, Oliver Hinder, and Aaron Sidford.
\newblock Accelerated methods for non-convex optimization.
\newblock arXiv:1611.00756 [math.OC], February~2 2017.

\bibitem[Cotter et~al.(2011)Cotter, Shamir, Srebro, and Sridharan]{Cotter_11a}
Andrew Cotter, Ohad Shamir, Nati Srebro, and Karthik Sridharan.
\newblock Better mini-batch algorithms via accelerated gradient methods.
\newblock In J.~Shawe-Taylor, R.~S. Zemel, P.~Bartlett, F.~Pereira, and K.~Q.
  Weinberger, editors, \emph{Advances in Neural Information Processing Systems
  (NIPS)}, volume~24, pages 1647--1655. MIT Press, Cambridge, MA, 2011.

\bibitem[Crammer et~al.(2006)Crammer, Dekel, Keshet, Shalev-Shwartz, and
  Singer]{Crammer_06a}
Koby Crammer, Ofer Dekel, Joseph Keshet, Shai Shalev-Shwartz, and Yoram Singer.
\newblock Online passive-aggressive algorithms.
\newblock \emph{Journal of Machine Learning Research}, 7:\penalty0 551--585,
  2006.

\bibitem[Defazio(2016)]{Defazio16a}
Aaron Defazio.
\newblock A simple practical accelerated method for finite sums.
\newblock In Daniel~D. Lee, Ulrike von Luxburg, and Isabelle Guyon, editors,
  \emph{Advances in Neural Information Processing Systems (NIPS)}, volume~29.
  MIT Press, Cambridge, MA, 2016.

\bibitem[Dekel et~al.(2012)Dekel, Gilad-Bachrach, Shamir, and Xiao]{Dekel_12a}
Ofer Dekel, Ran Gilad-Bachrach, Ohad Shamir, and Lin Xiao.
\newblock Optimal distributed online prediction using mini-batches.
\newblock \emph{Journal of Machine Learning Research}, 13:\penalty0 165--202,
  2012.

\bibitem[Duchi et~al.(2011)Duchi, Hazan, and Singer]{Duchi_11a}
John Duchi, Elad Hazan, and Yoram Singer.
\newblock Adaptive subgradient methods for online learning and stochastic
  optimization.
\newblock \emph{Journal of Machine Learning Research}, 12:\penalty0 2121--2159,
  July 2011.

\bibitem[Ghadimi and Lan(2016)]{GhadimLan16a}
Saeed Ghadimi and Guanghui Lan.
\newblock Accelerated gradient methods for nonconvex nonlinear and stochastic
  programming.
\newblock \emph{Math. Prog.}, 156\penalty0 (1):\penalty0 59--99, 2016.

\bibitem[Ghadimi et~al.(2016)Ghadimi, Lan, and Zhang]{Ghadim_16a}
Saeed Ghadimi, Guanghui Lan, and Hongchao Zhang.
\newblock Mini-batch stochastic approximation methods for nonconvex stochastic
  composite optimization.
\newblock \emph{Math. Prog.}, 155\penalty0 (1--2):\penalty0 267--305, 2016.

\bibitem[Goyal et~al.(2017)Goyal, Doll{\'a}r, Girshick, Noordhuis, Wesolowski,
  Kyrola, Tulloch, Jia, and He]{Goyal_17a}
Priya Goyal, Piotr Doll{\'a}r, Ross Girshick, Pieter Noordhuis, Lukasz
  Wesolowski, Aapo Kyrola, Andrew Tulloch, Yangqing Jia, and Kaiming He.
\newblock Accurate, large minibatch {SGD}: {Training} imagenet in 1 hour.
\newblock arXiv:1706.02677 [cs.CV], June~8 2017.

\bibitem[Hoffer et~al.(2017)Hoffer, Hubara, and Soudry]{Hoffer_17a}
Elad Hoffer, Itay Hubara, and Daniel Soudry.
\newblock Train longer, generalize better: {Closing} the generalization gap in
  large batch training of neural networks.
\newblock arXiv:1705.08741 [stat.ML], May~24 2017.

\bibitem[Kingma and Ba(2015)]{KingmaBa15a}
Diederik Kingma and Jimmy Ba.
\newblock Adam: {A} method for stochastic optimization.
\newblock In \emph{Proc. of the 3rd Int. Conf. Learning Representations (ICLR
  2015)}, San Diego, CA, May~7--9 2015.

\bibitem[Kulis and Bartlett(2010)]{KulisBartlet10a}
Brian Kulis and Peter~L. Bartlett.
\newblock Implicit online learning.
\newblock In Johannes F{\"u}rnkranz and Thorsten Joachims, editors, \emph{Proc.
  of the 27th Int. Conf. Machine Learning (ICML 2010)}, pages 575--582, Haifa,
  Israel, June~21--25 2010.

\bibitem[Lan(2012)]{Lan12a}
Guanghui Lan.
\newblock An optimal method for stochastic composite optimization.
\newblock \emph{Math. Prog.}, 133\penalty0 (1--2):\penalty0 365--397, June
  2012.

\bibitem[{LeCun} et~al.(1998){LeCun}, Bottou, Orr, and M{\"u}ller]{Lecun_98b}
Yann {LeCun}, Leon Bottou, Genevieve~B. Orr, and Klaus-Robert M{\"u}ller.
\newblock Efficient backprop.
\newblock volume 1524 of \emph{Lecture Notes in Computer Science}, pages 9--50,
  Berlin, 1998. Springer-Verlag.

\bibitem[Lee et~al.(2016)Lee, Lin, Ma, and Yang]{Lee_16b}
Jason~D. Lee, Qihang Lin, Tengyu Ma, and Tianbao Yang.
\newblock Distributed stochastic variance reduced gradient methods and a lower
  bound for communication complexity.
\newblock arXiv:1507.07595 [math.OC], January~6 2016.

\bibitem[Li et~al.(2014)Li, Zhang, Chen, and Smola]{Li_14e}
Mu~Li, Tong Zhang, Yuqiang Chen, and Alexander~J. Smola.
\newblock Efficient mini-batch training for stochastic optimization.
\newblock In \emph{Proc. of the 20th ACM SIGKDD Int. Conf. Knowledge Discovery
  and Data Mining (SIGKDD 2014)}, pages 661--670, New York City, NY,
  August~24--27 2014.

\bibitem[Lin et~al.(2015)Lin, Mairal, and Harchaoui]{Lin_15a}
Hongzhou Lin, Julien Mairal, and Zaid Harchaoui.
\newblock A universal catalyst for first-order optimization.
\newblock In C.~Cortes, N.~D. Lawrence, D.~D. Lee, M.~Sugiyama, and R.~Garnett,
  editors, \emph{Advances in Neural Information Processing Systems (NIPS)},
  volume~28, pages 3366--3374. MIT Press, Cambridge, MA, 2015.

\bibitem[Loosli et~al.(2007)Loosli, Canu, and Bottou]{Loosli_07a}
Ga{\"e}lle Loosli, St{\'e}phane Canu, and L{\'e}on Bottou.
\newblock Training invariant support vector machines using selective sampling.
\newblock In L{\'e}on Bottou, Olivier Chapelle, Dennis {DeCoste}, and Jason
  Weston, editors, \emph{Large Scale Kernel Machines}, Neural Information
  Processing Series, pages 301--320. MIT Press, 2007.

\bibitem[Martens(2010)]{Marten10a}
James Martens.
\newblock Deep learning via {Hessian}-free optimization.
\newblock In Johannes F{\"u}rnkranz and Thorsten Joachims, editors, \emph{Proc.
  of the 27th Int. Conf. Machine Learning (ICML 2010)}, pages 735--742, Haifa,
  Israel, June~21--25 2010.

\bibitem[Nemirovski and Yudin(1983)]{NemirovYudin83a}
A.~S. Nemirovski and D.~B. Yudin.
\newblock \emph{Problem Complexity and Method Efficiency in Optimization}.
\newblock John Wiley \& Sons, 1983.

\bibitem[Nesterov(2004)]{Nester04a}
Y.~Nesterov.
\newblock \emph{Introductory Lectures on Convex Optimization. {A} Basic
  Course}.
\newblock Number~87 in Applied Optimization. Springer-Verlag, 2004.

\bibitem[Pearlmutter(1994)]{Pearlm94a}
Barak~A. Pearlmutter.
\newblock Fast exact multiplication by the {Hessian}.
\newblock \emph{Neural Computation}, 6\penalty0 (1):\penalty0 147--160, January
  1994.

\bibitem[Reddi et~al.(2016)Reddi, Konecny, Richtarik, Poczos, and
  Smola]{Reddi_16b}
Sashank~J. Reddi, Jakub Konecny, Peter Richtarik, Barnabas Poczos, and Alex
  Smola.
\newblock {AIDE}: {Fast} and communication efficient distributed optimization.
\newblock arXiv:1608.06879 [math.OC], August~24 2016.

\bibitem[Schmidt et~al.(2011)Schmidt, Roux, and Bach]{Schmid_11a}
Mark Schmidt, Nicolas~Le Roux, and Francis Bach.
\newblock Convergence rates of inexact proximal-gradient methods for convex
  optimization.
\newblock In J.~Shawe-Taylor, R.~S. Zemel, P.~Bartlett, F.~Pereira, and K.~Q.
  Weinberger, editors, \emph{Advances in Neural Information Processing Systems
  (NIPS)}, volume~24, pages 1458--1466. MIT Press, Cambridge, MA, 2011.

\bibitem[Shalev-Shwartz and Ben-David(2014)]{ShalevBen-David14a}
Shai Shalev-Shwartz and Shai Ben-David.
\newblock \emph{Understanding Machine Learning: {From} Theory to Algorithms}.
\newblock Cambridge University Press, 2014.

\bibitem[Shalev-Shwartz et~al.(2009)Shalev-Shwartz, Shamir, Srebro, and
  Sridharan]{Shalev_09a}
Shai Shalev-Shwartz, Ohad Shamir, Nathan Srebro, and Karthik Sridharan.
\newblock Stochastic convex optimization.
\newblock In Sanjoy Dasgupta and Adam Klivans, editors, \emph{Proc. of the 22th
  Annual Conference on Learning Theory (COLT'09)}, Montreal, Quebec,
  June~18--21 2009.

\bibitem[Shamir(2016)]{Shamir16c}
Ohad Shamir.
\newblock Without-replacement sampling for stochastic gradient methods:
  {Convergence} results and application to distributed optimization.
\newblock In Daniel~D. Lee, Ulrike von Luxburg, and Isabelle Guyon, editors,
  \emph{Advances in Neural Information Processing Systems (NIPS)}, volume~29,
  pages 46--54. MIT Press, Cambridge, MA, 2016.

\bibitem[Shamir et~al.(2014)Shamir, Srebro, and Zhang]{Shamir_14a}
Ohad Shamir, Nati Srebro, and Tong Zhang.
\newblock Communication-efficient distributed optimization using an approximate
  {Newton}-type method.
\newblock In Eric Xing and Tony Jebara, editors, \emph{Proc. of the 31st Int.
  Conf. Machine Learning (ICML 2014)}, pages 1000--1008, Beijing, China,
  June~21--26 2014.

\bibitem[Vapnik(2000)]{Vapnik00a}
Vladimir~N. Vapnik.
\newblock \emph{The Nature of Statistical Learning Theory}.
\newblock Springer Series in Information Science and Statistics.
  Springer-Verlag, Berlin, second edition, 2000.

\bibitem[Wang et~al.(2017)Wang, Wang, and Srebro]{Wang_17b}
Jialei Wang, Weiran Wang, and Nathan Srebro.
\newblock Memory and communication efficient distributed stochastic
  optimization with minibatch prox.
\newblock In Satyen Kale and Ohad Shamir, editors, \emph{Annual Conference on
  Learning Theory}, Amsterdam, Netherlands, July~7--10 2017.

\bibitem[Zeiler(2012)]{Zeiler12a}
Matthew~D. Zeiler.
\newblock {ADADELTA}: {An} adaptive learning rate method.
\newblock arXiv:1212.5701 [cs.LG], December 2012.

\end{thebibliography}

\end{document}